\newcommand{\myparagraph}[1]{\smallskip\noindent\textbf{#1}}
\newtheorem{theorem}{Theorem}[section]
\newtheorem{lemma}[theorem]{Lemma}
\newcommand{\squeeze}{\textsc{Squeeze}}
\newcommand{\osplit}{\textsc{Split}}
\newcommand{\marps}{\textsc{Marps}}
\newcommand{\sflow}{\textsc{StepOfFlow}}
\newcommand{\squeezeinv}{\textsc{SqueezeInverse}}
\newcommand{\osplitinv}{\textsc{SplitInverse}}
\newcommand{\sflowinv}{\textsc{StepOfFlowInverse}}
\newcommand{\iid}{i.i.d\onedot}
\hrule\vspace{\baselineskip}
\begin{document}

\title{Normalizing Flows with Multi-Scale Autoregressive Priors}

\author{Shweta Mahajan\footnotemark[1] $^{\;1}$ \quad Apratim Bhattacharyya\footnotemark[1] $^{\;2}$  \quad  Mario Fritz$^{3}$ \quad Bernt Schiele$^2$ \quad Stefan Roth$^1$\\[0.5em]
$^1$Department of Computer Science, TU Darmstadt\\[0.10em]
$^2$Max Planck Institute for Informatics, Saarland Informatics Campus\\[0.10em]
$^3$CISPA Helmholtz Center for Information Security, Saarland Informatics Campus\\
}


\maketitle
\thispagestyle{fancy}

\begin{abstract}
\footnotetext[1]{Authors contributed equally.}
Flow-based generative models are an important class of exact inference models that admit efficient inference and sampling for image synthesis. 
Owing to the efficiency constraints on the design of the flow layers, \eg~split coupling flow layers in which approximately half the pixels do not undergo further transformations, they have limited expressiveness for modeling long-range data dependencies compared to autoregressive models that rely on conditional pixel-wise generation. 
In this work, we improve the representational power of flow-based models by introducing channel-wise dependencies in their latent space through multi-scale autoregressive priors (mAR). 
Our mAR prior for models with split coupling flow layers (mAR-SCF) can better capture dependencies in complex multimodal data. 
The resulting model achieves state-of-the-art density estimation results on MNIST, CIFAR-10, and ImageNet.
Furthermore, we show that mAR-SCF allows for improved image generation quality, with gains in FID and Inception scores compared to state-of-the-art flow-based models. 

\end{abstract}

\section{Introduction}
Deep generative models aim to learn complex dependencies within very high-dimensional input data, \eg~natural images \cite{BrockDS19,abs-1906-00446} or audio data \cite{DielemanOS18}, and enable generating new samples that are representative of the true data distribution. 
These generative models find application in various downstream tasks like image synthesis \cite{GoodfellowPMXWOCB14,KingmaW13,OordKK16} or speech synthesis \cite{DielemanOS18,OordLBSVKDLCSCG18}.
Since it is not feasible to learn the exact distribution, generative models generally approximate the underlying true distribution.
Popular generative models for capturing complex data distributions are Generative Adversarial Networks (GANs) \cite{GoodfellowPMXWOCB14}, which model the distribution implicitly and generate (high-dimensional) samples by transforming a noise distribution into the desired space with complex dependencies; however, they may not cover all modes of the underlying data distribution. 
Variational Autoencoders (VAEs) \cite{KingmaW13} optimize a lower bound on the log-likelihood of the data. 
This implies that VAEs can only approximately optimize the log-likelihood \cite{RezendeMW14}.

Autoregressive models \cite{domke2008killed,OordKEKVG16,OordKK16} and normalizing flow-based generative models \cite{DinhKB14,DinhSB17,KingmaD18} are exact inference models that optimize the exact log-likelihood of the data.
Autoregressive models can capture complex and long-range dependencies between the dimensions of a distribution, \eg~in case of images, as the value of a pixel is conditioned on a large context of neighboring pixels. 
The main limitation of this approach is that image synthesis is sequential and thus difficult to parallelize. 
\begin{figure}[t!]
  \centering
  \includegraphics[width=1\linewidth]{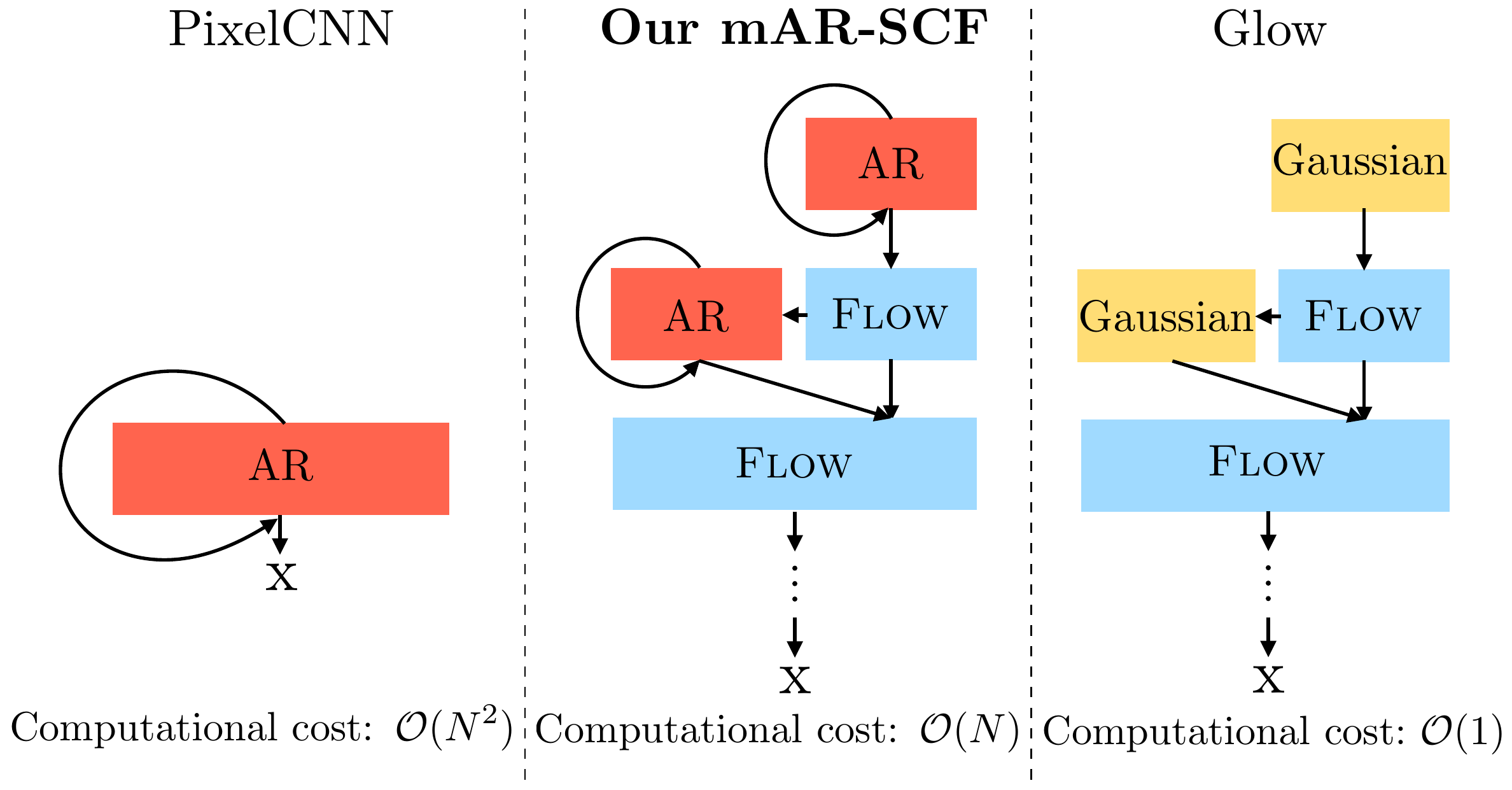}
\caption{Our \emph{mAR-SCF} model combines normalizing flows with autoregressive (AR) priors to improve modeling power while ensuring that the computational cost grows linearly with the spatial image resolution $N\times N$.}
\label{fig:teaser} 
\vspace{-0.5em}
\end{figure}
Recently proposed normalizing flow-based models, such as NICE \cite{DinhKB14}, RealNVP \cite{DinhSB17}, and Glow \cite{KingmaD18}, allow exact inference by mapping the input data to a known base distribution, \eg~a Gaussian, through a series of invertible transformations. 
These models leverage invertible split coupling flow (SCF) layers in which certain dimensions are left unchanged by the invertible transformation as well as \osplit{} operations following which certain dimensions do not undergo subsequent transformations. 
This allows for considerably easier parallelization of both inference and generation processes.
However, these models lag behind autoregressive models for density estimation. 

In this work, we 
\emph{(i)} propose multi-scale autoregressive priors for invertible flow models with split coupling flow layers, termed \emph{mAR-SCF}, to address the limited modeling power of non-autoregressive invertible flow models \cite{DinhSB17,HoCSDA19,KingmaD18,abs-1906-00446}  (\cref{fig:teaser});
\emph{(ii)} we apply our multi-scale autoregressive prior after every \osplit{} operation such that the computational cost of sampling grows linearly in the spatial dimensions of the image compared to the quadratic cost of traditional autoregressive models (given sufficient parallel resources);
\emph{(iii)} our experiments show that we achieve state-of-the-art density estimation results on MNIST \cite{lecun1998gradient}, CIFAR10 \cite{krizhevsky2009learning}, and ImageNet \cite{Russakovsky:2015:ILS} compared to prior invertible flow-based approaches; and finally
\emph{(iv)} we show that our multi-scale autoregressive prior leads to better sample quality as measured by the FID metric \cite{HeuselRUNH17} and the Inception score \cite{SalimansGZCRCC16}, significantly lowering the gap to GAN approaches \cite{RadfordMC15,WeiGL0W18}. 

\section{Related Work}
In this work, we combine the expressiveness of autoregressive models with the efficiency of non-autoregressive flow-based models for exact inference.

\myparagraph{Autoregressive models} \cite{ChenMRA18,domke2008killed,Graves13,HochreiterS97,ParmarVUKSKT18,OordKEKVG16,OordKK16} are a class of exact inference models that factorize the joint probability distribution over the input space as a product of conditional distributions, where each dimension is conditioned on the previous ones in a pre-defined order. 
Recent autoregressive models, such as PixelCNN and PixelRNN \cite{OordKEKVG16,OordKK16}, can generate high-quality images but are difficult to parallelize since synthesis is sequential. 
It is worth noting that autoregressive image models, such as that of Domke \etal~\cite{domke2008killed}, significantly pre-date their recent popularity.
Various extensions have been proposed to improve the performance of the PixelCNN model. 
For example,  Multiscale-PixelCNN \cite{ReedOKCWCBF17} extends PixelCNN to improve the sampling runtime from linear to logarithmic in the number of pixels, exploiting conditional independence between the pixels.
Chen \etal~\cite{ChenMRA18} introduce self-attention in PixelCNN models to improve the modeling power.
Salimans \etal~\cite{SalimansK0K17} introduce skip connections and a discrete logistic likelihood model.   
WaveRNN \cite{KalchbrennerESN18} leverages customized GPU kernels to improve the sampling speed for audio synthesis.  Menick \etal~\cite{MenickK19} synthesize images by sequential conditioning on sub-images within an image.
These methods, however, still suffer from slow sampling speed and are difficult to parallelize. 

\myparagraph{Flow-based generative models}, first introduced in \cite{DinhKB14}, also allow for exact inference. 
These models are composed of a series of invertible transformations, each with a tractable Jacobian and inverse, which maps the input distribution to a known base density, \eg~a Gaussian. 
Papamakarios \etal~\cite{PapamakariosMP17} proposed autoregressive invertible transformations using masked decoders. However, these are difficult to parallelize just like PixelCNN-based approaches.
Kingma \etal~\cite{kingma2016improved} propose inverse autoregressive flow (IAF), where the means and variances of pixels depend on random variables and not on previous pixels, making it easier to parallelize. 
However, the approach offers limited generalization \cite{OordLBSVKDLCSCG18}.

\myparagraph{Recent extensions.}
Recent work \cite{BehrmannGCDJ19,DinhSB17,KingmaD18} extends normalizing flows \cite{DinhKB14} to multi-scale architectures with split couplings, which allow for efficient inference and sampling.
For example, Kingma \etal~\cite{KingmaD18} introduce additional invertible $1 \times 1$ convolutions to capture non-linearities in the data distribution. 
Hoogeboom \etal~\cite{HoogeboomBW19} extend this to $d \times d$ convolutions, increasing the receptive field. 
\cite{abs-1906-02735} improve the residual blocks of flow layers with memory efficient gradients based on the choice of activation functions. 
A key advantage of flow-based generative models is that they can be parallelized for inference and synthesis. 
Ho \etal~\cite{HoCSDA19} propose Flow++ with various modifications in the architecture of the flows in \cite{DinhSB17}, including attention and a variational quantization method to improve the data likelihood. 
The resulting model is computationally expensive as non-linearities are applied along all the dimensions of the data at every step of the flow, \ie all the dimensions are instantiated with the prior distribution at the last layer of the flow.
While comparatively efficient, such flow-based models have limited expressiveness compared to autoregressive models, which is reflected in their lower data log-likelihood. 
It is thus desirable to develop models that have the expressiveness of autoregressive models and the efficiency of flow-based models.
This is our goal here.

\myparagraph{Methods with complex priors.}
Recent work \cite{0022KSDDSSA17} develops complex priors to improve the data likelihoods.
VQ-VAE2 integrates autoregressive models as priors \cite{abs-1906-00446} with discrete latent variables \cite{0022KSDDSSA17} for high-quality image synthesis and proposes latent graph-based models in a VAE framework.  
Tomczak \etal~\cite{TomczakW18} propose mixtures of Gaussians with predefined clusters, and \cite{0022KSDDSSA17} use neural autoregressive model priors in the latent space, which improves results for image synthesis.
Ziegler \etal~\cite{ZieglerR19} learn a prior based on normalizing flows to capture multimodal discrete distributions of character-level texts in the latent spaces with nonlinear flow layers. 
However, this invertible layer is difficult to be optimized in both directions.  
Moreover, these models do not allow for exact inference.
In this work, we propose complex autoregressive priors to improve the power of invertible split coupling-based normalizing flows \cite{DinhSB17,HoCSDA19,KingmaD18}.
\begin{figure*}[t!]
\begin{subfigure}[b]{0.38\linewidth}
\centering
   \includegraphics[width=0.64\linewidth]{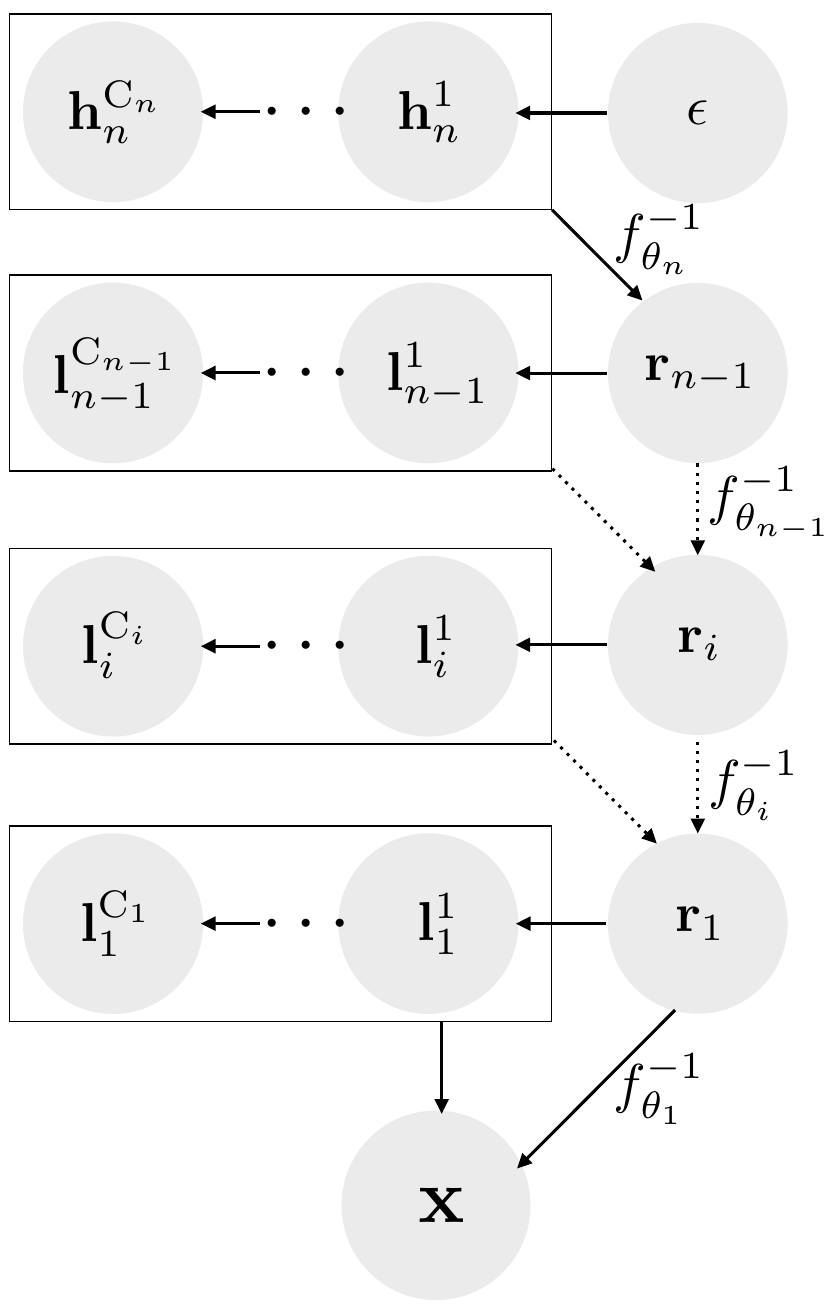}
   \caption{Generative model for mAR-SCF.}
   \label{fig:gen_model}
\end{subfigure}%
\begin{subfigure}[b]{0.62\linewidth}
\centering
  \includegraphics[width=\linewidth]{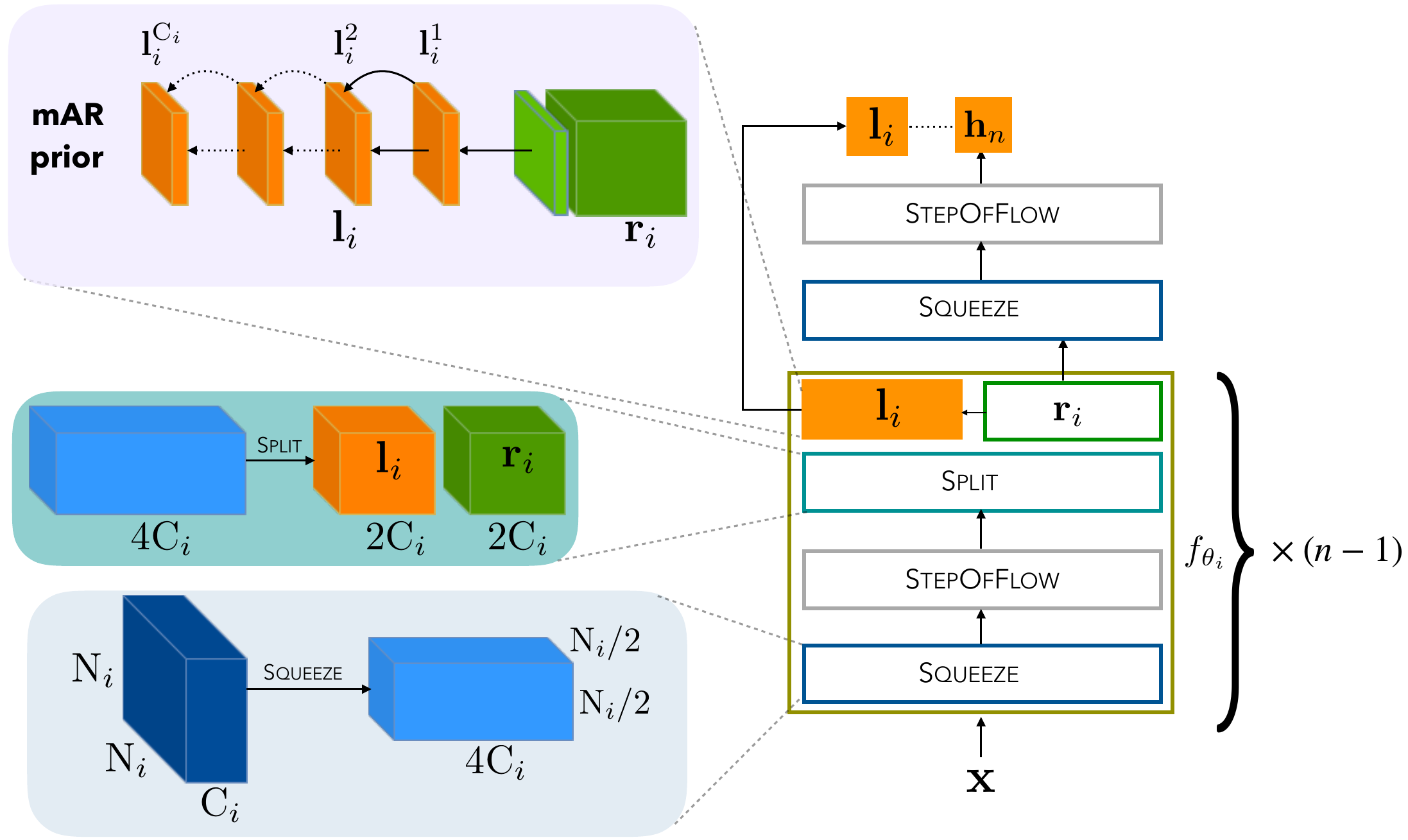}
  \caption{SCF flow with mAR prior.}
  \label{fig:marscf}
\end{subfigure}
\caption{Flow-based generative models with multi-scale autoregressive priors (\emph{mAR-SCF}). The generative model (\emph{left}) shows the multi-scale autoregressive sampling of the channel dimensions of $\mathbf{l}_i$ at each level. The spatial dimensions of each channel are sampled in parallel. $\mathbf{r}_i$ are computed with invertable transformations.  The \emph{mAR-SCF} model (\emph{right}) shows the complete multi-scale architecture with the mAR prior applied along the channels of $\mathbf{l}_i$, \ie at each level $i$ after the \osplit{} operation.}\label{fig:archi}
\end{figure*}

\section{Overview and Background}
In this work, we propose multi-scale autoregressive priors for split coupling-based flow models, termed \emph{mAR-SCF}, where we leverage autoregressive models to improve the modeling flexibility of invertible normalizing flow models without sacrificing sampling efficiency. 
As we build upon normalizing flows and autoregressive models, we first provide an overview of both. 

\myparagraph{Normalizing flows.}
Normalizing flows \cite{DinhKB14} are a class of exact inference generative models. 
Here, we consider invertible flows, which allow for both efficient exact inference and sampling. 
Specifically, invertible flows consist of a sequence of $n$ invertible functions $f_{\theta_i}$, which transform a density on the data $\mathbf{x}$ to a density on latent variables $\mathbf{z}$,
\begin{align}
\mathbf{x} \overset{f_{\theta_1}}{\longleftrightarrow} \mathbf{h}_{1} \overset{f_{\theta_2}}{\longleftrightarrow} \mathbf{h}_{2} \cdots \overset{f_{\theta_n}}{\longleftrightarrow} \mathbf{z}.
\label{eq:unflowtrans}
\end{align}

Given that we can compute the likelihood of $p(\mathbf{z})$, the likelihood of the data $\mathbf{x}$ under the transformation $f$ can be computed using the change of variables formula,
\begin{align}
\log p_{\theta}(\mathbf{x}) = \log p(\mathbf{z}) + \sum\limits_{i=1}^{n} \log \;\lvert \det J_{\theta_i} \lvert,
\label{eq:flow}
\end{align}
where $J_{\theta_i} = \nicefrac{\partial \mathbf{h}_{i}}{\partial \mathbf{h}_{i-1}}$ is the Jacobian of the invertible transformation $f_{\theta_i}$ going from $\mathbf{h}_{i-1}$ to $\mathbf{h}_{i}$ with $\mathbf{h}_0\equiv\mathbf{x}$.
Note that most prior work \cite{abs-1906-02735,DinhKB14,DinhSB17,HoCSDA19,KingmaD18} considers \iid Gaussian likelihood models of $\mathbf{z}$, \eg~$p(\mathbf{z}) = \mathcal{N}(\mathbf{z}\,|\,\mu,\sigma)$.

These models, however, have limitations. 
First, the requirement of invertibility constrains the class of functions $f_{\theta_i}$ to be monotonically increasing (or decreasing), thus limiting expressiveness. 
Second, of the three possible variants of $f_{\theta_i}$ to date \cite{ZieglerR19}, MAF (masked autoregressive flows), IAF (inverse autoregressive flows), and SCF (split coupling flows), MAFs are difficult to parallelize due to sequential dependencies between dimensions and IAFs do not perform well in practice. SCFs strike the right balance with respect to parallelization and modeling power. In detail, SCFs partition the dimensions into two equal halves \emph{and} transform one of the halves $\mathbf{r}_i$ conditioned on $\mathbf{l}_i$, leaving $\mathbf{l}_i$ unchanged and thus not introducing any sequential dependencies (making parallelization easier). Examples of SCFs include the affine couplings of RealNVP \cite{DinhSB17} and MixLogCDF couplings of Flow++ \cite{HoCSDA19}.


In practice, SCFs are organized into blocks \cite{DinhKB14,DinhSB17,KingmaD18} to maximize efficiency such that each $f_{\theta_i}$ typically consists of \squeeze{}, \sflow{}, and \osplit{} operations. 
\squeeze{} trades off spatial resolution for channel depth.
Suppose an intermediate layer $\mathbf{h}_{i}$ is of size $\left[\text{C}_i,\text{N}_i,\text{N}_i\right]$, then the \squeeze{} operation transforms it into size $\left[4 \, \text{C}_i,\text{N}_i/2,\text{N}_i/2\right]$ by reshaping $2\times 2$ neighborhoods into $4$ channels.
\sflow{} is a series of SCF (possibly several) coupling layers and invertible $1 \times 1$ convolutions \cite{DinhKB14,DinhSB17,KingmaD18}.

The \osplit{} operation (distinct from the split couplings) splits an intermediate layer $\mathbf{h}_{i}$ into two halves $\left\{\mathbf{l}_{i},\mathbf{r}_{i}\right\}$ of size $\left[2 \, \text{C}_i,\text{N}_i/2,\text{N}_i/2\right]$ each. Subsequent invertible layers ${f}_{\theta{{j}>{i}}}$ operate only on $\mathbf{r}_{i}$, leaving $\mathbf{l}_{i}$ unchanged. 
In other words, the \osplit{} operation fixes some dimensions of the latent representation $\mathbf{z}$ to $\mathbf{l}_{i}$ as they are not transformed any further. 
This leads to a significant reduction in the amount of computation and memory needed. 
In the following, we denote the spatial resolutions at the $n$ different levels as $\text{N}=\{\text{N}_{0}, \cdots, \text{N}_n\}$, with $N=\text{N}_0$ being the input resolution.
Similarly, $\text{C}=\{\text{C}_{0}, \cdots, \text{C}_n\}$ denotes the number of feature channels, with $C=\text{C}_0$ being the number of input channels.

In practice, due to limited modeling flexibility, prior SCF-based models \cite{DinhSB17,HoCSDA19,KingmaD18} require many SCF coupling layers in $f_{\theta_i}$ to model complex distributions, \eg~images. 
This in turn leads to high memory requirements and also leads to less efficient sampling procedures.

\myparagraph{Autoregressive models.}
Autoregressive generative models are another class of powerful and highly flexible exact inference models.
They factorize complex target distributions by decomposing them into a product of conditional distributions, \eg~images with $N \times N$ spatial resolution as
$p(\mathbf{x}) = \prod_{i=1}^N\prod_{j=1}^N p(\mathbf{x}_{i,j}| \mathbf{x}_{\text{pred}(i,j)})$ \cite{domke2008killed,Graves13,PapamakariosMP17,OordKEKVG16,OordKK16}.
Here, $\text{pred}(i,j)$ denotes the set of predecessors of pixel $(i,j)$.
The functional form of these conditionals can be highly flexible, and allows such models to capture complex multimodal distributions. However, such a dependency structure only allows for image synthesis via ancestral sampling by generating each pixel sequentially, conditioned on the previous pixels \cite{OordKEKVG16,OordKK16}, making parallelization difficult. This is also inefficient since autoregressive models, including PixelCNN and PixelRNN, require $\mathcal{O}(N^2)$ time steps for sampling.

\section{Multi-scale Autoregressive Flow Priors}
We propose to leverage the strengths of autoregressive models to improve invertible normalizing flow models such as \cite{DinhSB17,KingmaD18}. 
Specifically, we propose novel \emph{multi-scale autoregressive priors for split coupling flows (mAR-SCF)}. 
Using them allows us to learn complex multimodal latent priors $p(\mathbf{z})$ in multi-scale SCF models, \cf \cref{eq:flow}.
This is unlike \cite{DinhSB17,HoCSDA19,KingmaD18,abs-1906-00446}, which rely on Gaussian priors in the latent space. 
Additionally, we also propose a scheme for interpolation in the latent space of our \emph{mAR-SCF} models.

The use of our novel autoregressive \emph{mAR} priors for invertible flow models has two distinct advantages over both vanilla SCF and autoregressive models. First, the powerful autoregressive prior helps mitigate the limited modeling capacity of the vanilla SCF flow models. 
Second, as only the prior is autoregressive, this makes flow models with our \emph{mAR} prior an order of magnitude faster with respect to sampling time than fully autoregressive models. 
Next, we describe our multi-scale autoregressive prior in detail.

Our \emph{mAR-SCF} model uses an efficient invertible split coupling flow $f_{\theta_i}(\mathbf{x})$ to map the distribution over the data $\mathbf{x}$ to a latent variable $\mathbf{z}$ and then models an autoregressive  \emph{mAR} prior over $\mathbf{z}$, parameterized by $\phi$. 
The likelihood of a data point $\mathbf{x}$ of dimensionality  $\left[C,N,N\right]$ can be expressed as
\begin{align}
\log p_{\theta,\phi}(\mathbf{x}) = \log p_{\phi}(\mathbf{z}) + \sum\limits_{i=1}^{n} \log \;\lvert \det J_{\theta_i} \lvert.
\label{eq:hflow}
\end{align}

Here, $J_{\theta_i}$ is the Jacobian of the invertible transformations $f_{\theta_i}$.
Note that, as $f_{\theta_i}(\mathbf{x})$ is an invertible function, $\mathbf{z}$ has the same total dimensionality as the input data point $\mathbf{x}$.

\myparagraph{Formulation of the \emph{mAR} prior.}
We now introduce our \emph{mAR} prior $p_\phi(\mathbf{z})$ along with our \emph{mAR-SCF} model, which combines the split coupling flows $f_{\theta_i}$ with an \emph{mAR} prior.
As shown in \cref{fig:archi}, our \emph{mAR} prior is applied after every \osplit{} operation of the invertible flow layers as well as at the smallest spatial resolution. 
Let $\mathbf{l}_{i} = \big\{ \mathbf{l}^{1}_{i}, \cdots, \mathbf{l}^{{\text{C}_i}}_{i}\big\}$ be the $\text{C}_i$ channels of size $[\text{C}_i,\text{N}_i,\text{N}_i]$, which do not undergo further transformation $f_{\theta_i}$ after the \osplit{} at level $i$. 
Following the \osplit{} at level $i$, our \emph{mAR} prior is modeled as a conditional distribution, $p_{\phi}(\mathbf{l}_{i} | \mathbf{r}_{i})$; at the coarsest spatial resolution it is an unconditional distribution, $p_{\phi}(\mathbf{h}_{n})$. Thereby, we assume that our \emph{mAR} prior at each level $i$ autoregressively factorizes along the channel dimension as
\begin{align}
 p_{\phi}(\mathbf{l}_i|\mathbf{r}_i)=\prod_{j=1}^{\text{C}_i}p_{\phi}\Big(\mathbf{l}^{j}_{i} \Big| \mathbf{l}^{1}_{i}, \cdots, \mathbf{l}^{j-1}_{i}, \mathbf{r}_{i}\Big).
 \label{eq:condar}
\end{align}
Furthermore, the distribution at each spatial location $\left(m,n\right)$ within a channel $\mathbf{l}^{j}_{i}$ is modeled as a conditional Gaussian,
\begin{align}
p_{\phi}({l}^{j}_{i(m,n)} | \mathbf{l}^{1}_{i}, \cdots, \mathbf{l}^{j-1}_{i}, \mathbf{r}_{i} ) = \mathcal{N}\Big(\mu^{j}_{i(m,n)}, \sigma^{j}_{i(m,n)} \Big).
\label{eq:condar_g}
\end{align}
Thus, the mean, $\mu^{j}_{i(m,n)}$ and variance, $\sigma^{j}_{i(m,n)}$ at each spatial location are autoregressively modeled along the channels. This allows the distribution at each spatial location to be highly flexible and capture multimodality in the latent space. Moreover from \cref{eq:condar}, our \emph{mAR} prior can model long-range correlations in the latent space as the distribution of each channel is dependent on all previous channels. 

This autoregressive factorization allows us to employ Conv-LSTMs \cite{ShiGL0YWW17} to model the  distributions $p_{\phi}(\mathbf{l}^{j}_{i}| \mathbf{l}^{1}_{i}, \cdots, \mathbf{l}^{j-1}_{i}, \mathbf{r}_{i})$ and $p_{\phi}(\mathbf{h}_{n})$. Conv-LSTMs can model long-range dependencies across channels in their internal state. Additionally, long-range spatial dependencies within channels can be modeled by stacking multiple Conv-LSTM layers with a wide receptive field. This formulation allows all pixels within a channel to be sampled in parallel, while the channels are sampled in a sequential manner,
\begin{align}
\hat{\mathbf{l}}^{j}_{i} \sim p_{\phi}\Big(\mathbf{l}^{j}_{i} \Big| \mathbf{l}^{1}_{i}, \cdots, \mathbf{l}^{j-1}_{i}, \mathbf{r}_{i}\Big).
\label{eq:condarsamp}
\end{align}
This is in contrast to PixelCNN/RNN-based models, which sample one pixel at a time.

\myparagraph{The \emph{mAR-SCF} model.} We illustrate our \emph{mAR-SCF} model architecture in \cref{fig:marscf}. Our \emph{mAR-SCF} model leverages the \squeeze{} and \osplit{} operations for invertible flows introduced in \cite{DinhKB14,DinhSB17} for efficient parallelization. 
Following \cite{DinhKB14,DinhSB17,KingmaD18}, we use several \squeeze{} and \osplit{} operations in a multi-scale setup at $n$ scales (\cref{fig:marscf}) until the spatial resolution at $\mathbf{h}_{n}$ is reasonably small, typically $4 \times 4$. 
Note that there is no \osplit{} operation at the smallest spatial resolution. 
Therefore, the latent space is the concatenation of $\mathbf{z}=\{\mathbf{l}_{1}, \ldots, \mathbf{l}_{n-1},\mathbf{h}_{n}\}$.
The split coupling flows (SCF) $f_{\theta_i}$ in the \emph{mAR-SCF} model remain invertible by construction.
We consider different SCF couplings for $f_{\theta_i}$, including the affine couplings of \cite{DinhSB17,KingmaD18} and MixLogCDF couplings \cite{HoCSDA19}.

Given the parameters $\phi$ of our multimodal \emph{mAR} prior modeled by the Conv-LSTMs, we can compute $p_{\phi}(\mathbf{z})$ using the formulation in \cref{eq:condar,eq:condar_g}.
We can thus express \cref{eq:hflow} in \emph{closed form} and directly maximize the likelihood of the data under the multimodal \emph{mAR} prior distribution learned by the Conv-LSTMs. 

Next, we show that the computational cost of our \emph{mAR-SCF} model is $\mathcal{O}(N)$ for sampling an image of size $\left[C,N,N\right]$; this is in contrast to the standard $\mathcal{O}(N^2)$ computational cost required by purely autoregressive models. 

\begin{algorithm}[t]
 Sample $\hat{\mathbf{h}}_{n} \sim p_{\phi}(\mathbf{h}_{n})$ \;
 \For{$i\gets n-1$ \KwTo $1$}{
    \tcc{$\text{\osplitinv{}}$}    
        $\hat{\mathbf{r}}_{i} \gets \hat{\mathbf{h}}_{i+1}$ \tcp*{Assign previous}\
        $\hat{\mathbf{l}}_{i} \sim p_{\phi}(\mathbf{l}_{i} | \mathbf{r}_{i})$ \tcp*{ Sample \emph{mAR} prior}\
        $\hat{\mathbf{h}}_{i} \gets \Big\{ \hat{\mathbf{l}}_{i} , \hat{\mathbf{r}}_{i} \Big\}$ \tcp*{Concatenate}
    \tcc{$\text{\sflowinv{}}$}
        Apply $f^{-1}_{i}(\hat{\mathbf{h}}_{i})$ \tcp*{SCF coupling} 
    \tcc{$\text{\squeezeinv{}}$}
        Reshape $\hat{\mathbf{h}}_{i}$  \tcp*{Depth to Space}
 }
 $\mathbf{x} \gets \hat{\mathbf{h}}_{1}$ \;
 \caption{\marps{}: Multi-scale Autoregressive Prior Sampling for our \emph{mAR-SCF} models}
 \label{algo:samp}
\end{algorithm}

\begin{table*}[t]
  \centering
  \footnotesize
  \begin{tabularx}{\textwidth}{@{}Xl@{\hskip 3em}S[table-format=1]S[table-format=2]S[table-format=3]S[table-format=1.2]@{\hskip 3em}S[table-format=1]S[table-format=2]S[table-format=3]S[table-format=1.2]@{}}
	\toprule
	& & \multicolumn{4}{c@{\hskip 2em}}{MNIST} & \multicolumn{4}{@{\hskip 1em}c@{}}{CIFAR10}\\
	\cmidrule(l{-0.1em}r{3.1em}){3-6} \cmidrule(l{-0.1em}){7-10}
	Method & Coupling & {Levels} & {$\lvert \text{SCF} \lvert$} & {Channels} & {{\bfseries bits/dim} $(\downarrow)$} & {Levels} & {$\lvert \text{SCF} \lvert$} & {Channels} & {{\bfseries bits/dim} $(\downarrow)$} \\
	\midrule
    PixelCNN \cite{OordKK16} & Autoregressive & {--} & {--} & {--} & {--} & {--} & {--} & {--} & 3.00\\
	PixelCNN++ \cite{OordKEKVG16} & Autoregressive & {--} & {--} & {--} & {--} & {--} & {--} & {--} & \itshape 2.92\\
	\midrule
	Glow \cite{KingmaD18} & Affine & 3 & 32 & 512 & 1.05 & 3 & 32 & 512 & 3.35  \\ 
	Flow++ \cite{HoCSDA19} & MixLogCDF & {--} & {--} & {--} & {--}  & 3 & {--} & 96 & 3.29 \\
	Residual Flow \cite{abs-1906-02735} & Residual & 3 & 16 & {--} & 0.97 & 3 & 16 & {--} & 3.28 \\ 
	\midrule
	\emph{mAR-SCF} (Ours) & Affine & 3 & 32 & 256 & 1.04  & 3 & 32 & 256 & 3.33  \\
	\emph{mAR-SCF} (Ours) & Affine & 3 & 32 & 512 & 1.03 & 3 & 32 & 512 & 3.31  \\
	\emph{mAR-SCF} (Ours) & MixLogCDF & 3 & 4 & 96 & \bfseries 0.88 & 3 & 4 & 96 & 3.27  \\
	\emph{mAR-SCF} (Ours) & MixLogCDF & {--} & {--} & {--} & {--} & 3 & 4 & 256 & \bfseries 3.24 \\
	\bottomrule
  \end{tabularx}
  \caption{Evaluation of our \emph{mAR-SCF} model on MNIST and CIFAR10 (using uniform dequantization for fair comparsion with \cite{abs-1906-02735,KingmaD18}).}
  \label{tab:cm_abl}
\end{table*}

\myparagraph{Analysis of sampling time.} 
We now formally analyze the computational cost in the number of steps $T$ required for sampling with our \emph{mAR-SCF} model. 
First, we describe the sampling process in detail in \cref{algo:samp} (the forward training process follows the sampling process in reverse order). 
Next, we derive the worst-case number of steps $T$ required by \marps{}, given sufficient parallel resources to sample a channel in parallel. Here, the number of steps $T$ can be seen as the length of the critical path while sampling.
\begin{lemma}
Let the sampled image $\mathbf{x}$ be of resolution $\left[C,N,N\right]$, then the worst-case number of steps $T$ (length of the critical path) required by MARPS is $\mathcal{O}(N)$.
\end{lemma}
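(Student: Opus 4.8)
The plan is to trace the critical path of \cref{algo:samp} one level at a time, charge every primitive operation its parallel (critical-path) cost, and then sum the charges. First I would record the tensor shapes induced by the multi-scale construction of \cite{DinhKB14,DinhSB17,KingmaD18} that \emph{mAR-SCF} inherits: starting from $\mathbf{h}_0=\mathbf{x}$ of shape $[C,N,N]$, each level applies a \squeeze{} (multiplying the channel count by $4$, halving each spatial side), then a size-preserving \sflow{}, then an \osplit{} that halves the channel count and peels off $\mathbf{l}_i$. Unrolling this recursion gives spatial resolution $\text{N}_i = N/2^{i}$ and $\text{C}_i = \Theta(2^{i}C)$ channels in $\mathbf{l}_i$; since the pyramid is stopped once $\text{N}_n$ reaches a resolution independent of $N$ (typically $4\times 4$), there are $n=\Theta(\log N)$ levels and $\mathbf{h}_n$ carries $\Theta(2^{n}C)=\Theta(N)$ channels.

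Next I would bound the parallel cost of each line of \marps{}. The \squeezeinv{} reshape, the assignment $\hat{\mathbf{r}}_i \gets \hat{\mathbf{h}}_{i+1}$, and the concatenation are all $O(1)$. Applying $f^{-1}_i$ inverts the $|\text{SCF}|$ split-coupling layers together with the invertible $1\times1$ convolutions; given the conditioning half, each inverse acts on all spatial positions in parallel through a constant-depth CNN, so this line costs $O(|\text{SCF}|)=O(1)$ per level. The only genuinely sequential work is sampling the \emph{mAR} prior with the Conv-LSTM, which emits channels one at a time but all spatial locations of a channel in parallel; hence drawing $\hat{\mathbf{l}}_i \sim p_\phi(\mathbf{l}_i\mid\mathbf{r}_i)$ costs $\Theta(\text{C}_i)=\Theta(2^{i}C)$ steps (up to the fixed Conv-LSTM depth), and drawing $\hat{\mathbf{h}}_n \sim p_\phi(\mathbf{h}_n)$ costs $\Theta(2^{n}C)=\Theta(N)$ steps.

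Finally I would sum these charges along the sequential \texttt{for} loop, obtaining
\begin{align}
T \;=\; \Theta(N) + \sum_{i=1}^{n-1}\Big( O(1) + \Theta(2^{i}C) \Big) \;=\; \Theta(N) + \Theta\Big(C\sum_{i=1}^{n-1} 2^{i}\Big) \;=\; \Theta(2^{n}C) \;=\; \Theta(N),
\end{align}
where $C$, $|\text{SCF}|$, and the Conv-LSTM depth are held fixed; hence $T=\mathcal{O}(N)$. The step I expect to be the crux is the summation above: one must observe that although the per-level autoregressive cost $\Theta(2^{i}C)$ grows geometrically with the level, the geometric series collapses to $\Theta(2^{n}C)$, and that $2^{n}=\Theta(N)$ precisely because the multi-scale pyramid terminates at a resolution that does not depend on $N$. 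The companion pitfall to avoid is equating the critical-path length with the total latent dimensionality: the blocks $\mathbf{l}_1,\dots,\mathbf{l}_{n-1},\mathbf{h}_n$ jointly hold $CN^2$ scalars, but since every channel is sampled with full spatial parallelism, only the \emph{channel} counts $\text{C}_i$---never the spatial footprints $\text{N}_i^2$---lie on the critical path, which is exactly what keeps $T$ linear rather than quadratic in $N$.
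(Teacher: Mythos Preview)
Your proposal is correct and follows essentially the same approach as the paper: compute the number of channels at each level ($\text{C}_i=\Theta(2^iC)$ for $\mathbf{l}_i$ and $\Theta(2^{n+1}C)$ for $\mathbf{h}_n$), charge one sequential step per channel since spatial positions are sampled in parallel, sum the resulting geometric series to $\Theta(2^nC)$, and conclude $T=\mathcal{O}(N)$ from $n\le\log N$. The only (minor) addition you make is to explicitly bound the non-autoregressive pieces (\squeezeinv{}, \sflowinv{}, concatenation) as $O(1)$ per level, which the paper leaves implicit; this does not change the argument.
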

 
\begin{proof}
%
At the first sampling step (\cref{fig:gen_model}) at layer $f_{\theta_n}$, our \emph{mAR} prior is applied to generate $\mathbf{h}_{n}$, which is of shape $[2^{n+1} \, C, N/2^{n}, N/2^n]$. 
Therefore, the number of sequential steps required at \emph{the last flow layer} $\mathbf{h}_n$ is
\begin{align}
    T_{n} = C \cdot 2^{n + 1}.
\end{align}
Here, we are assuming that each channel can be sampled in parallel in one time-step.

From $f_{\theta_{n-1}}$ to $f_{\theta_1}$, $f_{\theta_i}$ always contains a \osplit{} operation. Therefore, at each $f_{\theta_i}$ we use our \emph{mAR} prior to sample $\mathbf{l}_{i}$, which has shape  $[2^{i} \, C, N/2^{i}, N/2^i]$. 
Therefore, the number of sequential steps required for sampling at layers $\mathbf{h}_i, 1\leq i < n$ of our \emph{mAR-SCF} model is
\begin{align}
    T_{i} = C \cdot 2^{i}.
\end{align}

Therefore, the total number of sequential steps (length of the critical path) required for sampling is
\begin{align}
    \begin{split}
        T &= T_{n} + T_{n-1} + \cdots + T_{i} + \cdots + T_{1}\\
          &= C \cdot \big( 2^{n + 1} + 2^{n-1} + \cdots + 2^{i} + \cdots + 2^{1} \big) \\
          &= C \cdot \big( 3 \cdot 2^{n} - 2 \big).
    \end{split}
\end{align}

Now, the total number of layers in our \emph{mAR-SCF} model is $n \leq \log(N)$. This is because each layer reduces the spatial resolution by a factor of two. Therefore, the total number of time-steps required is
\begin{align}
    \begin{split}
        T &\leq 3 \cdot C \cdot N.
    \end{split}
\end{align}

In practice, $C \ll N$, with $C=\text{C}_{0}=3$ for RGB images.
Therefore, the total number of sequential steps required for sampling in our \emph{mAR-SCF} model is  $T = \mathcal{O}(N)$.
\end{proof}

It follows that with our multi-scale autoregressive \emph{mAR} priors in our \emph{mAR-SCF} model, sampling can be performed in a linear number of time-steps in contrast to fully autoregressive models like PixelCNN, which require a quadratic number of time-steps \cite{OordKK16}.

\myparagraph{Interpolation.}
A major advantage of invertible flow-based models is that they allow for latent spaces, which are useful for downstream tasks like interpolation -- smoothly transforming one data point into another. 
Interpolation is simple in case of typical invertible flow-based models, because the latent space is modeled as a unimodal \iid Gaussian. 
To allow interpolation in the space of our multimodal \emph{mAR} priors, we develop a simple method based on \cite{BreglerO94}.

Let $\mathbf{x}_\text{A}$ and $\mathbf{x}_\text{B}$ be the two images (points) to be interpolated and $\mathbf{z}_\text{A}$ and $\mathbf{z}_\text{B}$ be the corresponding points in the latent space. We begin with an initial linear interpolation between the two latent points, $\left\{ \mathbf{z}_\text{A}, \mathbf{z}^{1}_\text{A,B}, \cdots, \mathbf{z}^{k}_\text{A,B}, \mathbf{z}_\text{B} \right\}$, such that, $\mathbf{z}^{i}_\text{A,B} = (1 - \alpha^i) \, \mathbf{z}_\text{A} + \alpha^i \, \mathbf{z}_\text{B}$. 
The initial linearly interpolated points $\mathbf{z}^{i}_\text{A,B}$ may not lie in a high-density region under our multimodal prior, leading to non-smooth transformations. 
Therefore, we next project the interpolated points $\mathbf{z}^{i}_\text{A,B}$ to a high-density region, without deviating too much from their initial position. 
This is possible because our \emph{mAR} prior allows for exact inference. 
However, the image corresponding to the projected $\bar{\mathbf{z}}^{i}_\text{A,B}$ must also not deviate too far from either $\mathbf{x}_\text{A}$ and $\mathbf{x}_\text{B}$ either to allow for smooth transitions. 
To that end, we define the projection operation as
\begin{align}
    \begin{split}
    \bar{\mathbf{z}}^{i}_\text{A,B} &= \arg\min \Big(\big\lVert \bar{\mathbf{z}}^{i}_\text{A,B} - \mathbf{z}^{i}_\text{A,B} \big\lVert - \lambda_1 \log p_{\phi}\big(\bar{\mathbf{z}}^{i}_\text{A,B}\big) \\
    &+ \lambda_2 \min\big( \big\lVert f^{-1}(\bar{\mathbf{z}}^{i}_\text{A,B}) - \mathbf{x}_\text{A} \big\lVert, \big\lVert f^{-1}(\bar{\mathbf{z}}^{i}_\text{A,B}) - \mathbf{x}_\text{B} \big\lVert  \big) \Big),
    \end{split}\raisetag{38pt}
    \label{eq:interpolation}
\end{align}
where $\lambda_1,\lambda_2$ are the regularization parameters. 
The term controlled by $\lambda_1$ pulls the interpolated $\mathbf{z}^{i}_\text{A,B}$ back to high-density regions, while the term controlled by $\lambda_2$ keeps the result close to the two images $\mathbf{x}_\text{A}$ and $\mathbf{x}_\text{B}$. Note that this reduces to linear interpolation when $\lambda_1 = \lambda_2=0$. 

\section{Experiments}
We evaluate our approach on the MNIST \cite{lecun1998gradient}, CIFAR10 \cite{krizhevsky2009learning}, and ImageNet \cite{OordKK16} datasets.
In comparison to datasets like CelebA, CIFAR10 and ImageNet are highly multimodal and the performance of invertible SCF models has lagged behind autoregressive models in density estimation and behind GAN-based generative models regarding image quality. 

\subsection{MNIST and CIFAR10}

\begin{figure*}[t]
\centering
	\begin{subfigure}[b]{0.5\textwidth}
        \centering
        \includegraphics[height=5cm]{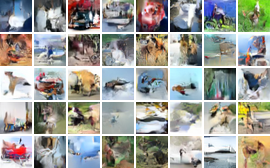}
        \caption{Residual Flows \cite{abs-1906-02735} (3.28 bits/dim, 46.3 FID)}
        \label{fig:pr_anlya}
    \end{subfigure}\hfill%
    \begin{subfigure}[b]{0.5\textwidth}
        \centering
        \includegraphics[height=5cm]{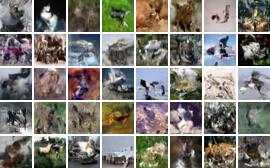}
        \caption{Flow++ with variational dequantization \cite{HoCSDA19} (3.08 bits/dim)}
        \label{fig:pr_anlyb}
    \end{subfigure}
    
    \begin{subfigure}[b]{0.5\textwidth}
        \centering
        \includegraphics[height=5cm]{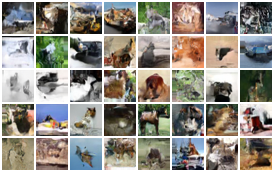}
        \caption{Our \emph{mMAR-SCF} Affine (3.31 bits/dim, 41.0 FID)}
        \label{fig:pr_anlyc}
    \end{subfigure}%
    \begin{subfigure}[b]{0.5\textwidth}
        \centering
        \includegraphics[height=5cm]{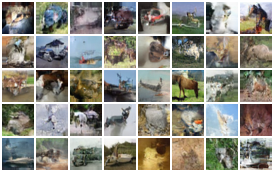}
        \caption{Our \emph{mMAR-SCF} MixLogCDF (3.24 bits/dim, 41.9 FID)}
        \label{fig:pr_anlyd}
    \end{subfigure}

\caption{Comparison of random samples from our \emph{mAR-SCF} model with state-of-the-art models. }
\label{fig:cifar10_samp}
\end{figure*}

\paragraph{Architecture details.} 
Our \emph{mAR} prior at each level $f_{\theta_i}$ consists of three convolutional LSTM layers, each of which uses 32 convolutional filters to compute the input-to-state and state-to-state components. 
Keeping the \emph{mAR} prior architecture constant, we experiment with different SCF couplings in $f_{\theta_i}$ to highlight the effectiveness of our \emph{mAR} prior. 
We experiment with affine couplings of \cite{DinhSB17,KingmaD18} and MixLogCDF couplings \cite{HoCSDA19}. 
Affine couplings have limited modeling flexibility. 
The more expressive MixLogCDF applies the cumulative distribution function of a mixture of logistics. 
In the following, we include experiments varying the number couplings and the number of channels in the convolutional blocks of the neural networks used to predict the affine/MixLogCDF transformation parameters.\footnotemark[2] \footnotetext[2]{Code available at: \url{https://github.com/visinf/mar-scf}}

\myparagraph{Hyperparameters.} 
We use Adamax (as in \cite{KingmaD18}) with a learning rate of $8 \times 10^{-4}$. 
We use a batch size of 128 with affine and 64 with MixLogCDF couplings (following \cite{HoCSDA19}).

\myparagraph{Density estimation.} 
We report density estimation results on MNIST and CIFAR10 in \cref{tab:cm_abl} using the per-pixel log-likelihood metric in bits/dim. 
We also include the architecture details (\# of levels, coupling type, \# of channels). 
We compare to the state-of-the-art Flow++ \cite{HoCSDA19} method with SCF couplings and Residual Flows \cite{abs-1906-02735}. 
Note that in terms of architecture, our \emph{mAR-SCF} model with affine couplings is closest to that of Glow \cite{KingmaD18}. 
Therefore, the comparison with Glow serves as an ideal ablation to assess the effectiveness of our \emph{mAR} prior. 
Flow++ \cite{HoCSDA19}, on the other hand, uses the more powerful MixLogCDF transformations and their model architecture does not include \osplit{} operations. 
Because of this, Flow++ has higher computational and memory requirements for a given batch size compared to Glow. 
Furthermore, for fair comparison with Glow \cite{KingmaD18} and Residual flows \cite{abs-1906-02735}, we use uniform dequantization unlike Flow++, which proposes to use variational dequantization.                 

In comparison to Glow, we achieve improved density estimation results on both MNIST and CIFAR10. 
In detail, we outperform Glow (\eg 1.05 \vs 1.04 bits/dim on MNIST and 3.35 \vs 3.33 bits/dim on CIFAR10) with $\lvert \text{SCF} \lvert=32$ affine couplings and 3 levels, while using parameter prediction networks with only half (256 \vs 512) the number of channels.
We observe that increasing the capacity of our parameter prediction networks to 512 channels boosts the log-likelihood further to 1.03 bits/dim on MNIST and 3.31 bits/dim on CIFAR10.
As this setting with 512 channels is identical to setting reported in \cite{KingmaD18}, this shows that our \emph{mAR} prior boosts the accuracy by $\sim\!0.04$ bits/dim in case of CIFAR10. 
To place this performance gain in context, it is competitive with the $\sim\!0.03$ bits/dim boost reported in \cite{KingmaD18} (\cf Fig.~3 in \cite{KingmaD18}) with the introduction of the $1 \times 1$ convolution. 
We train our model for $\sim\!3000$ epochs, similar to \cite{KingmaD18}. 
Also note that we only require a batch size of 128 to achieve state-of-the-art likelihoods, whereas Glow uses batches of size 512. 
Thus our \emph{mAR-SCF} model improves density estimates and requires significantly lower computational resources ($\sim\!48$ \vs $\sim\!128$ GB memory). 
Overall, we also observe competitive sampling speed (see also \cref{tab:cm_speed}). 
This firmly establishes the utility of our \emph{mAR-SCF} model. 

For fair comparison with Flow++ \cite{HoCSDA19} and Residual Flows \cite{abs-1906-02735}, we employ the more powerful MixLogCDF couplings. Our \emph{mAR-SCF} model uses 4 MixLogCDF couplings at each level with 96 channels but includes \osplit{} operations unlike Flow++. Here, we outperform Flow++ and Residual Flows (3.27 \vs 3.29 and 3.28 bits/dim on CIFAR10) while being equally fast to sample as Flow++ (\cref{tab:cm_speed}). A baseline model without our \emph{mAR} prior has performance comparable to Flow++ (3.29 bits/dim).
Similarly on MNIST, our \emph{mAR-SCF} model again outperforms Residual Flows (0.88 \vs 0.97 bits/dim).
Finally, we train a more powerful \emph{mAR-SCF} model with 256 channels with sampling speed competitive with \cite{abs-1906-02735}, which achieves state-of-the-art 3.24 bits/dim on CIFAR10 \footnotemark[3]. This is attained after $\sim\!400$ training epochs (comparable to $\sim\!350$ epochs required by \cite{abs-1906-02735} to achieve 3.28 bits/dim).  
Next, we compare the sampling speed of our \emph{mAR-SCF} model with that of Flow++ and Residual Flow.

\begin{table}[h]
  \centering
  \scriptsize
  \begin{tabularx}{\columnwidth}{@{}Xl@{\hskip 1em}S[table-format=1]@{\hskip 1em}S[table-format=2]@{\hskip 1em}S[table-format=3]@{\hskip 2.5em}S[table-format=2,table-figures-exponent = 1]@{}}
	\toprule
	Method & Coupling & {Levels} & {$\lvert \text{SCF} \lvert$} & {Ch.} & {Speed $(\si{\milli\s}, \downarrow)$}\\
	\midrule
	Glow \cite{KingmaD18} & Affine & 3 & 32 & 512 & 13  \\ 
	Flow++ \cite{HoCSDA19} & MixLogCDF & 3 & {--} & 96 & 19\\ 
	Residual Flow \cite{abs-1906-02735} &  Residual & 3 & 16 & {--} & 34 \\ 
	\midrule
	PixelCNN++ \cite{SalimansK0K17} & Autoregressive & {--} & {--} & {--} & 5e3 \\ 
	\midrule
	\emph{mAR-SCF} (Ours) &  Affine & 3 & 32 & 256 &  \bfseries 6 \\
	\emph{mAR-SCF} (Ours) &  Affine & 3 & 32 & 512 &  17 \\
	\emph{mAR-SCF} (Ours) &  MixLogCDF & 3 & 4 & 96  & 19 \\
	\emph{mAR-SCF} (Ours) &  MixLogCDF & 3 & 4 & 256 & 32 \\
	\bottomrule
  \end{tabularx}
  \caption{Evaluation of sampling speed with batches of size 32.}
  \label{tab:cm_speed}
  \vspace{-0.4 em}
\end{table}
\footnotetext[3]{\emph{mAR-SCF} with 256 channels trained on CIFAR10($\sim 1000$ epochs) trained to convergence achieves 3.22 bits/dim}
\myparagraph{Sampling speed.} 
We report the sampling speed of our \emph{mAR-SCF} model in \cref{tab:cm_speed} in terms of sampling one image on CIFAR10. 
We report the average over 1000 runs using a batch size of 32. 
We performed all tests on a single Nvidia V100 GPU with 32GB of memory. 
First, note that our \emph{mAR-SCF} model with affine coupling layers in 3 levels with 512 channels needs \SI{17}{\milli\s} on average to sample an image. 
This is comparable with Glow, which requires \SI{13}{\milli\s}. 
This shows that our \emph{mAR} prior causes only a slight increase in sampling time -- particularly because our \emph{mAR-SCF} requires only $\mathcal{O}(N)$ steps to sample and the prior has far fewer parameters compared to the invertible flow network. 
Moreover, our \emph{mAR-SCF} model with affine coupling layers with 256 channels is considerably faster (6 \vs \SI{13}{\milli\s}) with an accuracy advantage. 
Similarly, our \emph{mAR-SCF} with MixLogCDF and 96 channels is competitive in speed with \cite{HoCSDA19} with an accuracy advantage and considerably faster than \cite{abs-1906-02735} (19 \vs \SI{34}{\milli\s}). 
This is because Residual Flows are slower to invert (sample) as there is no closed-form expression of the inverse. 
Furthermore, our \emph{mAR-SCF} with MixLogCDF and 256 channels is competitive with respect to \cite{abs-1906-02735} in terms of sampling speed while having a large accuracy advantage. 
Finally, note that these sampling speeds are two orders of magnitude faster than state-of-the-art fully autoregressive approaches, \eg PixelCNN++ \cite{SalimansK0K17}.

\begin{figure}[t]
\centering
    \begin{subfigure}[b]{\linewidth}
        \centering
        \includegraphics[width=\linewidth]{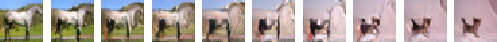}
    \end{subfigure}\\
    \begin{subfigure}[b]{\linewidth}
        \centering
        \includegraphics[width=\linewidth]{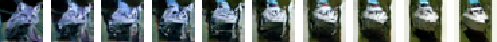}
    \end{subfigure}
\caption{Interpolations of our \emph{mAR-SCF} model on CIFAR10.}
\label{fig:interpolation}
\vspace{-0.5em}
\end{figure}

\begin{figure}[t]
\centering
    \begin{subfigure}{\columnwidth}
        \centering
        \includegraphics[height=3.55cm]{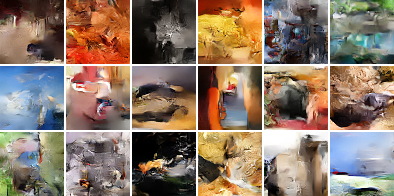}
        \caption{Residual Flows \cite{abs-1906-02735} (3.75 bits/dim)}
        \label{fig:imagenet_sampa}
    \end{subfigure}%
    
    \begin{subfigure}{\columnwidth}
        \centering
        \includegraphics[height=3.55cm]{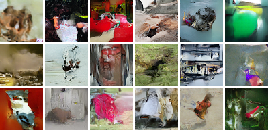}
        \caption{Our \emph{mMAR-SCF} (Affine, 3.80 bits/dim) }
        \label{fig:imagenet_sampb}
    \end{subfigure}

\caption{Random samples on ImageNet ($64 \times 64$).}
\label{fig:imagenet_samp}
\vspace{-0.5em}
\end{figure}

\myparagraph{Sample quality.} 
Next, we analyze the sample quality of our \emph{mAR-SCF} model in \cref{tab:cm_fid} using the FID metric \cite{HeuselRUNH17} and Inception scores \cite{SalimansGZCRCC16}. 
The analysis of sample quality is important as it is well-known that visual fidelity and test log-likelihoods are not necessarily indicative of each other \cite{TheisOB15}.
We achieve an FID of 41.0 and an Inception score of 5.7 with our \emph{mAR-SCF} model with affine couplings, significantly better than Glow with the same specifications and Residual Flows. While our \emph{mAR-SCF} model with MixLogCDF couplings also performs comparably, empirically we find affine couplings to lead to better image quality as in \cite{abs-1906-02735}.
We show random samples from our \emph{mAR-SCF} model with both affine and MixLogCDF couplings in \cref{fig:cifar10_samp}. 
Here, we compare to the version of Flow++ with MixLogCDF couplings and variational dequantization (which gives even better log-likelihoods) and Residual Flows. 
Our \emph{mAR-SCF} model achieves better sample quality with more clearly defined objects.
Furthermore, we also obtain improved sample quality over both PixelCNN and PixelIQN and close the gap in comparison to adversarial approaches like DCGAN \cite{RadfordMC15} and WGAN-GP \cite{WeiGL0W18}.
This highlights that our \emph{mAR-SCF} model is able to better capture long-range correlations. 

\myparagraph{Interpolation.} We show interpolations on CIFAR10 in \cref{fig:interpolation}, obtained using \cref{eq:interpolation}. We observe smooth interpolation between images belonging to distinct classes. 
This shows that the latent space of our \emph{mAR} prior can be potentially used for downstream tasks similarly to Glow \cite{KingmaD18}. We include additional analyses in Appendix E.

\begin{table}[t]
  \centering
  \footnotesize
  \begin{tabularx}{\columnwidth}{@{}Xl@{\hskip 2em}S[table-format=2.1]S[table-format=1.1]@{}}
	\toprule
	Method & Coupling & {FID $(\downarrow)$} & {Inception Score $(\uparrow)$}\\
	\midrule
	PixelCNN \cite{OordKK16} & Autoregressive & 65.9 & 4.6\\
	PixelIQN  \cite{OstrovskiDM18} & Autoregressive & 49.4 & {--}\\
	\midrule
	Glow \cite{KingmaD18} & Affine & 46.9 & {--} \\ 
	Residual Flow \cite{abs-1906-02735} & Residual & 46.3 & 5.2 \\
	\emph{mAR-SCF} (Ours) & MixLogCDF &  41.9 & \bfseries 5.7 \\
	\emph{mAR-SCF} (Ours) & Affine & \bfseries 41.0 & \bfseries 5.7 \\
	\midrule
	DCGAN \cite{RadfordMC15} & Adversarial & 37.1 & 6.4\\
    WGAN-GP \cite{WeiGL0W18} & Adversarial & \itshape 36.4 & \itshape 6.5\\
	\bottomrule
  \end{tabularx}
  \caption{Evaluation of sample quality on CIFAR10. Other results are quoted from \cite{abs-1906-02735,OstrovskiDM18}.}
  \label{tab:cm_fid}
  \vspace{-0.5em}
\end{table}

\begin{table}[t]
  \centering
  \scriptsize
  \begin{tabularx}{\columnwidth}{@{}X@{\hskip 1em}l@{\hskip 1em}S[table-format=2]S[table-format=3]@{\hskip 2em}S[table-format=1.2]r@{}S[table-format=3,table-space-text-post={$\quad\:$}]@{}}
	\toprule
	Method & Coupling & {$\lvert \text{SCF} \lvert$} & {Ch.} & {bits/dim $(\downarrow)$} & \multicolumn{2}{@{}c@{}}{Mem (GB, $\downarrow$)}\\
	\midrule
	Glow \cite{KingmaD18} & Affine & 32 & 512 & 4.09 & $\:\sim$ & 128 \\
	Residual Flow \cite{abs-1906-02735} & Residual & 32 & {--} & 4.01 & \multicolumn{2}{@{}c@{}}{--} \\
	\midrule
	\emph{mAR-SCF} (Ours) & Affine & 32 & 256 & 4.07 & $\sim$ & \bfseries 48 \\
	\emph{mAR-SCF} (Ours) & MixLogCDF & 4 & 460 & \bfseries 3.99 & $\sim$ & 80 \\
	\bottomrule
  \end{tabularx}
  \caption{Evaluation on ImageNet ($32 \times 32$).}
  \label{tab:cm_fid_imagenet}
\end{table}

\subsection{ImageNet}
Finally, we evaluate our \emph{mAR-SCF} model on ImageNet ($32 \times 32$ and $64 \times 64$) against the best performing models on MNIST and CIFAR10 in \cref{tab:cm_fid_imagenet}, \ie{} Glow \cite{KingmaD18} and Residual Flows \cite{abs-1906-02735}. 
Our model with affine couplings outperforms Glow while using fewer channels (4.07 \vs 4.09 bits/dim).
For comparison with the more powerful Residual Flow models, we use four MixLogCDF couplings at each layer $f_{\theta_i}$ with 460 channels. 
We again outperform Residual Flows \cite{abs-1906-02735} (3.99 \vs 4.01 bits/dim). 
These results are consistent with the findings in \cref{tab:cm_abl}, highlighting the advantage of our \emph{mAR} prior. 
Finally, we also evaluate on the ImageNet ($64 \times 64$) dataset. 
Our \emph{mAR-SCF} model with affine flows achieves 3.80 \vs 3.81 bits/dim in comparison to Glow \cite{KingmaD18}. 
We show qualitative examples in \cref{fig:imagenet_samp} and compare to Residual Flows. 
We see that although the powerful Residual Flows obtain better log-likelihoods (3.75 bits/dim), our \emph{mAR-SCF} model achieves better visual fidelity. 
This again highlights that our \emph{mAR} is able to better capture long-range correlations.

\section{Conclusion}
We presented \emph{mAR-SCF}, a flow-based generative model with novel multi-scale autoregressive priors for modeling long-range dependencies in the latent space of flow models. 
Our \emph{mAR} prior considerably improves the accuracy of flow-based models with split coupling layers. 
Our experiments show that not only does our \emph{mAR-SCF} model improve density estimation (in terms of bits/dim), but also considerably improves the sample quality of the generated images compared to previous state-of-the-art exact inference models. 
We believe the combination of complex priors with flow-based models, as demonstrated by our \emph{mAR-SCF} model, provides a path toward efficient models for exact inference that approach the fidelity of GAN-based approaches.

\clearpage
{\small 
\myparagraph{Acknowledgement.} SM and SR acknowledge the support by the German Research Foundation as part of the Research Training Group \emph{Adaptive Preparation of Information from Heterogeneous Sources (AIPHES)} under grant No.~GRK 1994/1.}
{\small
\bibliographystyle{ieee_fullname}
\bibliography{egbib}
}

\clearpage
\appendix
\appendixpage
\addappheadtotoc
We provide additional details of Lemma 4.1 in the main paper, additional details of our \emph{mAR-SCF} model architecture, as well as additional results and qualitative examples.  
\begin{figure*}[t]
\centering
   \includegraphics[width=0.6\linewidth]{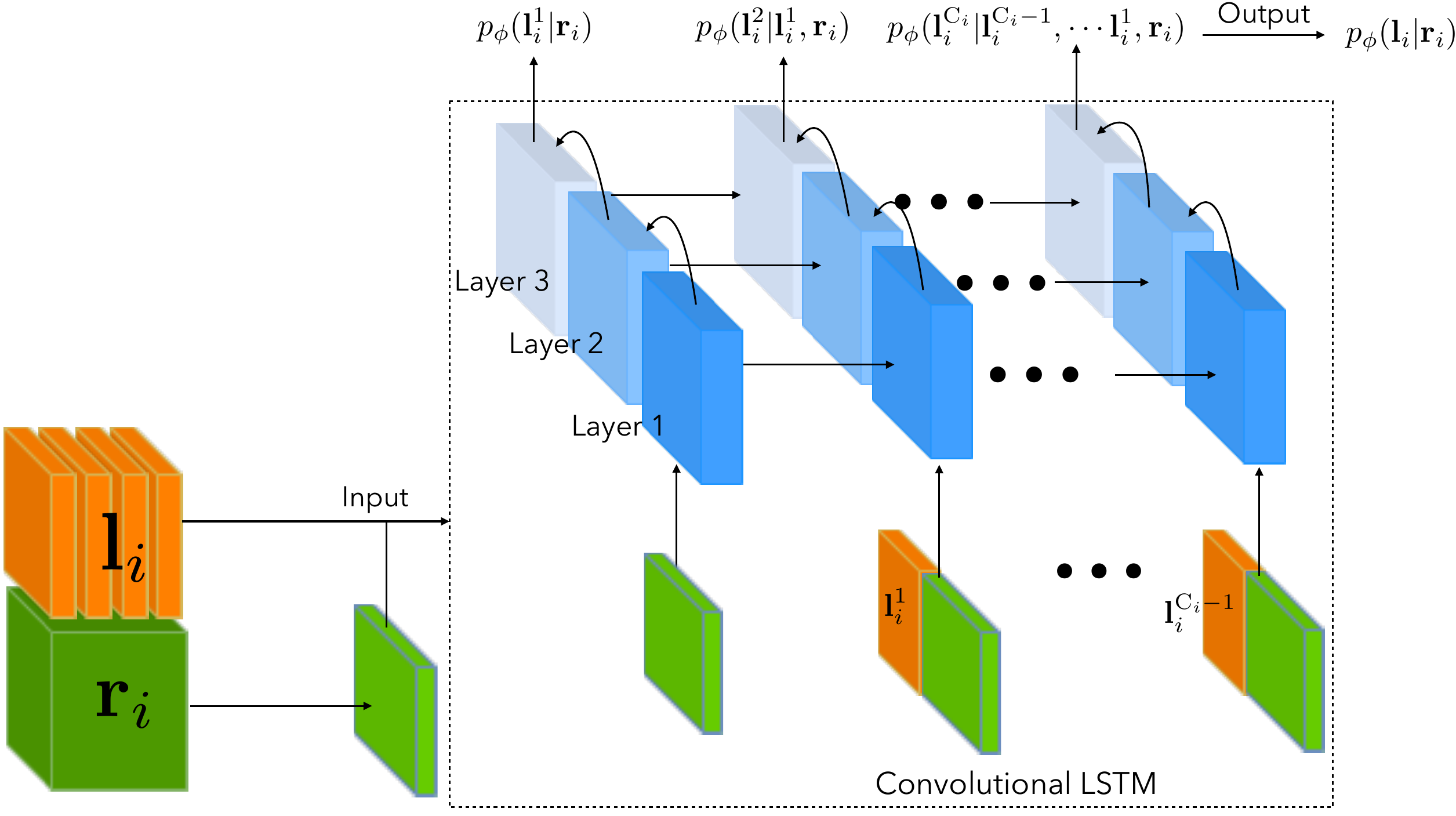}
   \caption{Architecture of our multi-scale autoregressive (\emph{mAR}) prior.}
\label{fig:mARarchi}
\end{figure*}
\section{Channel Dimensions in \emph{mAR-SCF}}
We begin by providing additional details of Lemma 4.1 in the main paper. We formally prove the claim that the number of channels at the last layer $n$ (which does not have a \osplit{} operation) is $\text{C}_n = 2^{n+1}\cdot C$ for an image of size $[C, N, N]$. Thereby, we also show that the number of channels at any layer $i$ (with a \osplit{} operation) is $\text{C}_i = 2^{i} \cdot C$.

Note that the number of time steps required for sampling depends on the number of channels for \emph{mAR-SCF} (flow model) based on \emph{MARPS} (Algorithm 1). 
\begin{lemma}
Let the size of the sampled image be $[C, N,N ]$. The number of channels at the last layer $\mathbf{h}_n$ is $\text{C}_n = 2^{n+1}\cdot C$.
\end{lemma}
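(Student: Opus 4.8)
The plan is to prove both claims at once by a short induction on the level index, tracking nothing more than the shape of the tensor that flows from one level into the next. The only ingredients needed are the effects of the three operations making up a flow block $f_{\theta_i}$, all already recorded in the background section: \squeeze{} reshapes $2\times2$ spatial neighborhoods into the channel axis, hence multiplies the channel count by $4$ while halving each spatial side length; \sflow{} is a composition of SCF couplings and invertible $1\times1$ convolutions, i.e.\ a bijection on a tensor of fixed shape, so it leaves both the channel count and the resolution unchanged; and \osplit{} partitions a tensor into two equal halves along the channel axis, hence halves the channel count while preserving the resolution. For bookkeeping I would write $\mathbf{r}_i$ for the half retained after the \osplit{} at level $i$ (which is then fed into $f_{\theta_{i+1}}$), adopting the convention $\mathbf{r}_0 := \mathbf{x}$, and track the pair (channels, spatial side length) of $\mathbf{r}_i$.

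First I would treat the levels $1\le i\le n-1$, all of which contain an \osplit{}. The inductive claim is that $\mathbf{r}_{i-1}$ has shape $[2^{i-1} C,\ N/2^{i-1},\ N/2^{i-1}]$; the base case $i=1$ is immediate since $\mathbf{r}_0=\mathbf{x}$ has shape $[C,N,N]=[2^0 C,\ N/2^0,\ N/2^0]$. Passing $\mathbf{r}_{i-1}$ through \squeeze{} produces a tensor of shape $[4\cdot 2^{i-1} C,\ N/2^{i},\ N/2^{i}]=[2^{i+1} C,\ N/2^{i},\ N/2^{i}]$; \sflow{} preserves this shape; and \osplit{} then halves the channel axis, so $\mathbf{l}_i$ and $\mathbf{r}_i$ each have shape $[2^{i} C,\ N/2^{i},\ N/2^{i}]$. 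This yields $\text{C}_i = 2^{i}\cdot C$, in agreement with the shape of $\mathbf{l}_i$ used in the proof of our sampling-time lemma, and it carries the induction forward because $\mathbf{r}_i$ is exactly the input to $f_{\theta_{i+1}}$.

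It then remains to handle the terminal level $n$, which by construction has no \osplit{}: the block $f_{\theta_n}$ consists only of \squeeze{} followed by \sflow{}. By the induction just established, $\mathbf{r}_{n-1}$ has shape $[2^{n-1} C,\ N/2^{n-1},\ N/2^{n-1}]$; \squeeze{} turns it into $[2^{n+1} C,\ N/2^{n},\ N/2^{n}]$ and \sflow{} leaves it there, so $\mathbf{h}_n$ has exactly $\text{C}_n = 2^{n+1}\cdot C$ channels, as claimed. As a sanity check I would confirm consistency with invertibility of the whole flow by counting dimensions: $\sum_{i=1}^{n-1}\lvert\mathbf{l}_i\rvert + \lvert\mathbf{h}_n\rvert = \sum_{i=1}^{n-1} 2^i C\,(N/2^i)^2 + 2^{n+1} C\,(N/2^n)^2 = \sum_{i=1}^{n-1} C N^2/2^i + C N^2/2^{n-1} = C N^2$, the dimensionality of $\mathbf{x}$; and the tracked resolutions $N/2^i$ give $n\le\log_2 N$, the bound invoked afterwards. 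Since the whole argument is pure bookkeeping, there is no genuine obstacle; the only point needing care is the off-by-one in the indexing — one must feed $f_{\theta_i}$ the \osplit{} output $\mathbf{r}_{i-1}$ of the previous level rather than a freshly re-squeezed copy of an earlier tensor, and one must exclude the terminal level $n$ from the inductive step precisely because it lacks the channel-halving \osplit{}.
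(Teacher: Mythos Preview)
Your proof is correct and follows essentially the same approach as the paper: an induction on the level index that tracks the tensor shape through the \squeeze{} (channels $\times 4$), \sflow{} (shape-preserving), and \osplit{} (channels $\div 2$) operations, with the terminal level $n$ handled separately because it lacks the \osplit{}. Your indexing via the retained half $\mathbf{r}_{i-1}$ is slightly cleaner than the paper's formulation, and your dimension-count sanity check is a nice addition, but the core argument is the same bookkeeping induction.
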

\begin{proof}
The shape of image to be sampled is $[C, N, N]$. Since the network is invertible, the size of the image at level $\mathbf{h}_0$ is the size of the image sampled, \ie $[C, N, N]$.

First, let the number of layers $n$ be 1. 
This implies that during the forward pass the network applies a \squeeze{} operation, which reshapes the image to $[4C, N/2, N/2]$ followed by \sflow{}, which does not modify the shape of the input, \ie the shape remains $[4C, N/2, N/2]$. 
Thus the number of channels at the last layer $\mathbf{h}_1$ is $\text{C}_1 = 4 \cdot C = 2^{1+1}\cdot C$ when $n=1$.

Next, let us assume that the number of channels at the last layer $\mathbf{h}_{k-1}$ for a flow with $k-1$ layers is
\begin{equation}
    \text{C}_{k-1} = 2^{k}\cdot C.
    \label{eq:mi_assume}
\end{equation}
We aim to prove by induction that the number of channels at the last layer $\mathbf{h}_k$ for a flow with $k$ layers then is
\begin{equation}
    \text{C}_{k} = 2^{k+1}\cdot C.
    \label{eq:mi_hypo}
\end{equation}

To that end, we note that the dimensionality of the output at layer $k-1$ after \squeeze{} and $\sflow{}$ from \cref{eq:mi_assume} by assumption is $\text{C}_{k-1} = 2^{k} \cdot C$. 
For a flow with $k$ layers, the ${(k-1)}^\text{st}$ layer has a \osplit{} operation resulting in  $\{\mathbf{l}_{k-1},\mathbf{r}_{k-1}\}$, each with size $[2^{k-1}C, N/2^{k-1},N/2^{k-1} ]$. 
The number of channels for $\mathbf{r}_{k-1}$ at layer $k-1$ is thus $2^{k}\cdot C/2 = 2^{k-1} \cdot C$. 
At layer $k$ the input with $2^{k-1} \cdot C$ channels is transformed by \squeeze{} to $2^{k-1}\cdot 4\cdot C = 2^{(k-1)+2}\cdot C = 2^{k+1}\cdot C$ channels.

Therefore, by induction \cref{eq:mi_hypo} holds for $k=n$. Thus the number of channels at the last layer $\mathbf{h}_n$ is given as $\text{C}_n= 2^{n+1}\cdot C$.
\end{proof}

\section{\emph{mAR-SCF} Architecture}
In \cref{fig:mARarchi}, we show the architecture of our \emph{mAR} prior in detail for a layer $i$ of \emph{mAR-SCF}. The network is a convolutional LSTM with three LSTM layers. The input at each time-step is the previous channel of $\mathbf{l}_i$, concatenated with the output after convolution on $\mathbf{r}_i$. Our \emph{mAR} prior autoregressivly outputs the probability of each channel ${\mathbf{l}}^{j}_{i}$ given by the distribution $p_\phi\big(\mathbf{l}_i^{j}|\mathbf{l}_i^{j-1}, \cdots,\mathbf{l}_i^{1},\mathbf{r}_i \big)$ modeled as $\mathcal{N}\big(\mu^{j}_{i}, \sigma^{j}_{i}\big)$ during inference. Because of the internal state of the Convolutional LSTM (memory), our \emph{mAR} prior can learn long-range dependencies. 

In Fig.~2 in the main paper, the \sflow{} operation consists of an activation normalization layer, an invertible $1 \times 1$ convolution layer followed by split coupling layers (32 layers for affine couplings or 4 layers of MixLogCDF at each level). 

\section{Empirical Analysis of Sampling Speed}
In \cref{fig:cifar10_time}, we analyze the real-world sampling time of our state-of-the-art \emph{mAR-SCF} model with MixLogCDF couplings using varying image sizes $[C,N,N]$. 
In particular, we vary the input spatial resolution $N$. 
We report the mean and variance of 1000 runs with batch size of 8 on an Nvidia V100 GPU with 32GB memory. 
Using smaller batch sizes of 8 ensures that we always have enough parallel resources for all image sizes. 
Under these conditions, we see that the sampling time increases linearly with the image size.
This is because the number of sampling steps of our \emph{mAR-SCF} model scales as $\mathcal{O}(N)$, \cf Lemma 4.1.

\begin{figure}[h]
\centering
\includegraphics[height=4.8cm]{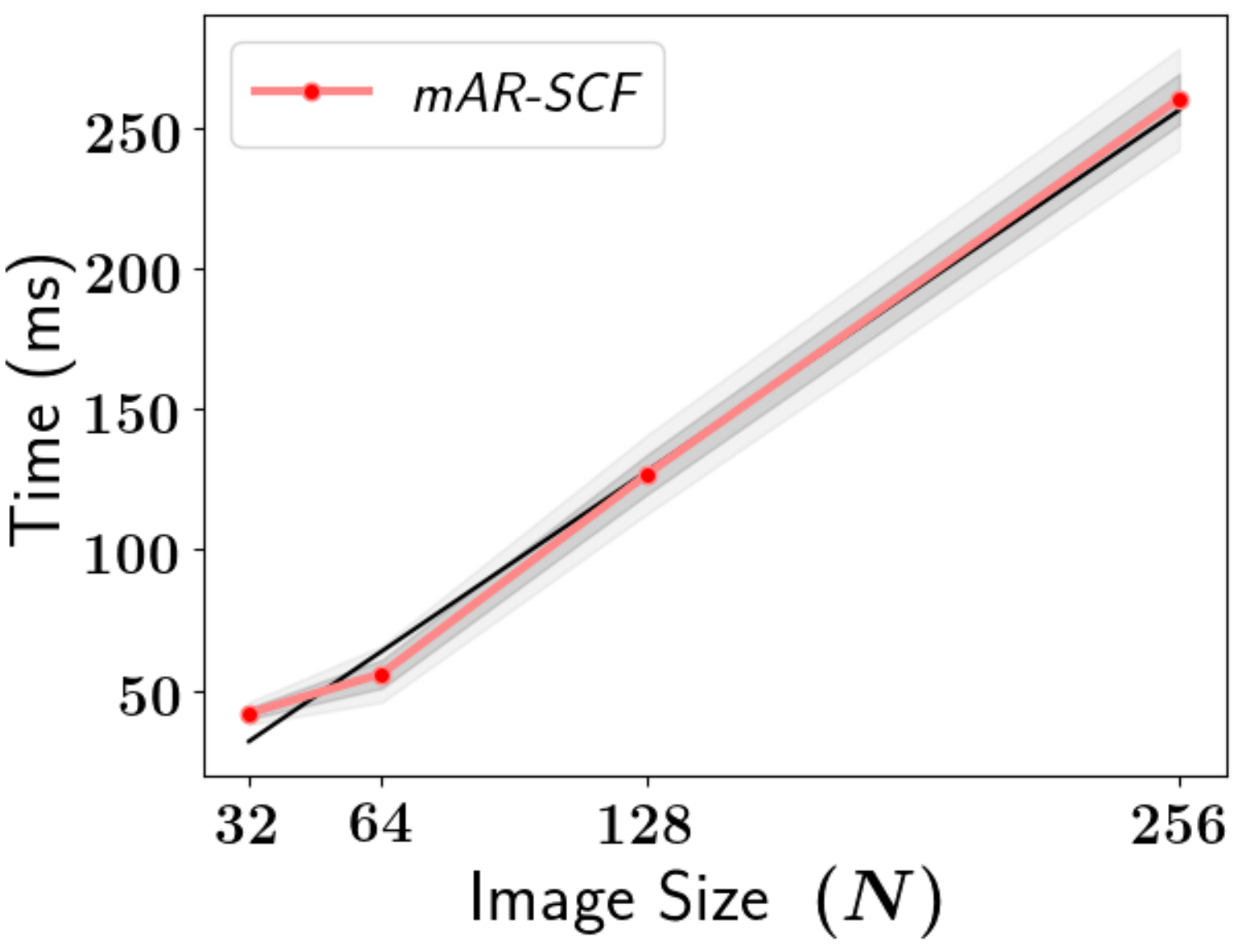}
\caption{Analysis of real-world sampling time of our \emph{mAR-SCF} model with varying image size $[C,N,N]$. For reference, we show the line $y = 2x$ in black.}
\label{fig:cifar10_time}
\end{figure}

\section{Additional Qualitative Examples}
We include additional qualitative examples for training our generative model on the MNIST, CIFAR10, and ImageNet ($32 \times 32$) datasets. 
In \cref{fig:mnist_samp}, we see that the visual sample quality of our \emph{mAR-SCF} model with MixLogCDF couplings is competitive with those of the state-of-the-art Residual Flows \cite{abs-1906-02735}. 
Moreover, note that we achieve better test log-likelihoods (0.88 \vs 1.00 bits/dim).

We additionally provide qualitative examples for ImageNet ($32 \times 32$) in \Cref{fig:imagenet32_samp}. We observe that in comparison to the state-of-the-art Flow++ \cite{HoCSDA19} and Residual Flows \cite{abs-1906-02735}, the images generated by our \emph{mAR-SCF} model are detailed with a competitive visual quality.

Finally, on CIFAR10 we compare with the fully autoregressive PixelCNN model \cite{OordKK16} in \Cref{fig:cifar10_2_samp}. We see that although the fully autoregressive PixelCNN achieves significantly better test log-likelihoods (3.00 \vs 3.24 bits/dim), our \emph{mAR-SCF} models achieves better visual sample quality (also shown by the FID and Inception metrics in Table 3 of the main paper).

\section{Additional Interpolation Results}
\begin{figure}[h]
        \centering
        \includegraphics[height=6cm]{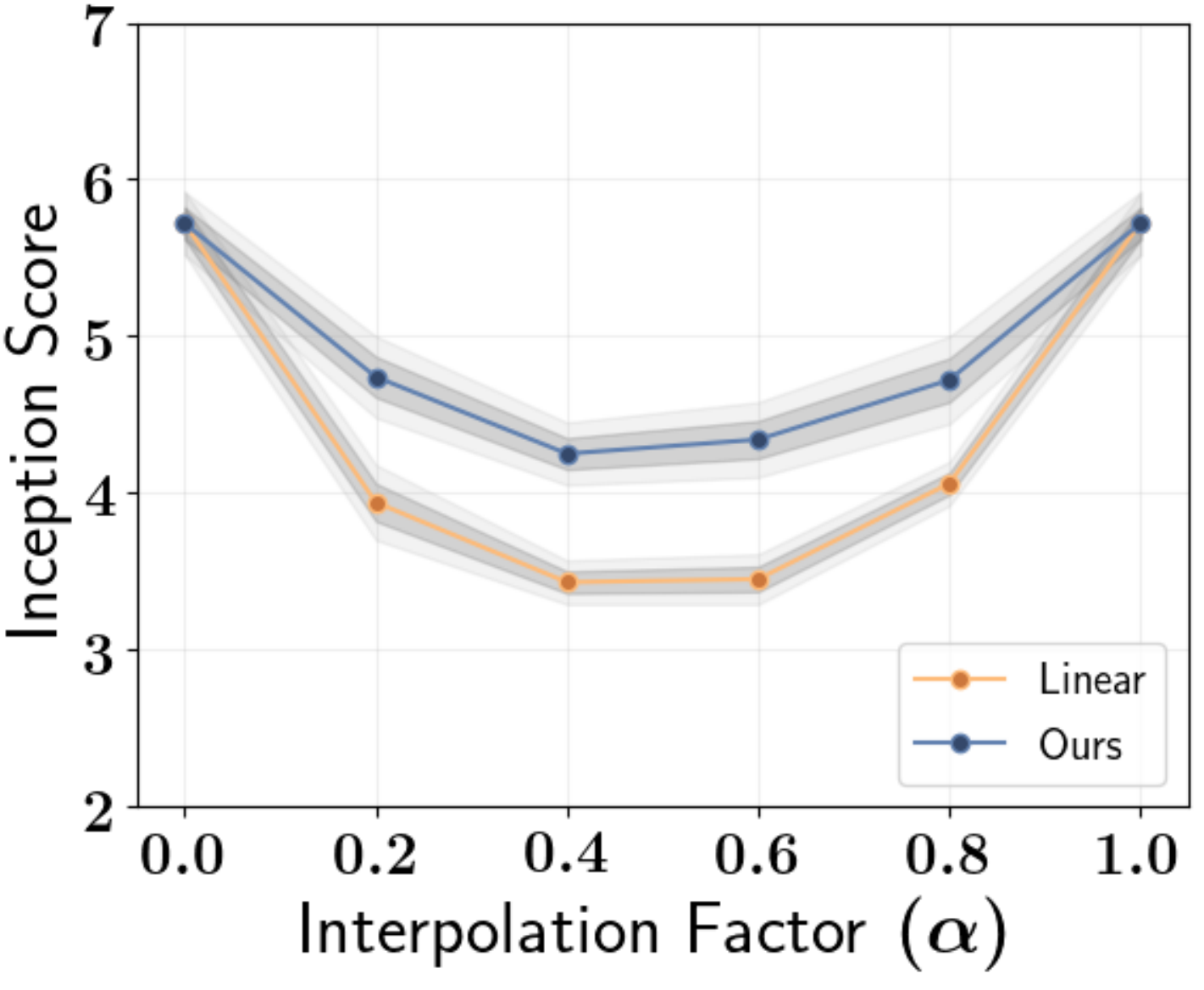}
\caption{Inception scores of random samples generated with our interpolation scheme versus linear interpolation.}
\label{fig:cifar10_interp}
\end{figure}
Finally, in \cref{fig:interp} we show qualitative examples to compare the effect using our proposed interpolation method (Eq.~11) versus simple linear interpolation in the multimodal latent space of our \emph{mAR} prior.
We use the Adamax optimizer to optimize Eq.~(11) with a learning rate of $5\times10^{-2}$ for $\sim$~100 iterations. The computational requirement per iteration is approximately equivalent to a standard training iteration of our  \emph{mAR-SCF} model, \ie $\sim$ 1 sec for a batch of size 128.
We observe that interpolated images obtained from our \emph{mAR-SCF} affine model using our proposed interpolation method (Eq.~11) have a better visual quality, especially for the interpolations in the middle of the interpolating path. 
Note that we find our scheme to be stable in practice in a reasonably wide range of the hyperparameters $\lambda_2 \approx \lambda_1\in[0.2,0.5]$ as it interpolates in high-density regions between $\mathbf{x}_{\text{A}},\mathbf{x}_{\text{B}}$. 
We support the better visual quality of the interpolations of our scheme compared to the linear method with Inception scores computed for interpolations obtained from both methods in \cref{fig:cifar10_interp}.

\begin{figure*}[t]
    \begin{subfigure}{0.49\textwidth}
        \centering
        \includegraphics[height=7cm]{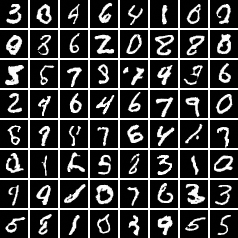}
        \caption{Residual Flows \cite{abs-1906-02735}}
        \label{fig:mnist_sampa}
    \end{subfigure}%
    \begin{subfigure}{0.49\textwidth}
        \centering
        \includegraphics[height=7cm]{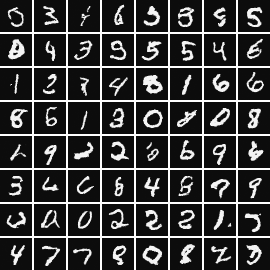}
        \caption{Our \emph{mAR-SCF} (MixLogCDF)}
        \label{fig:mnist_sampb}
    \end{subfigure}

\caption{Random samples when trained on MNIST.}
\label{fig:mnist_samp}
\end{figure*}
\begin{figure*}[h]
\centering
    \begin{subfigure}{0.49\textwidth}
        \centering
        \includegraphics[height=7cm]{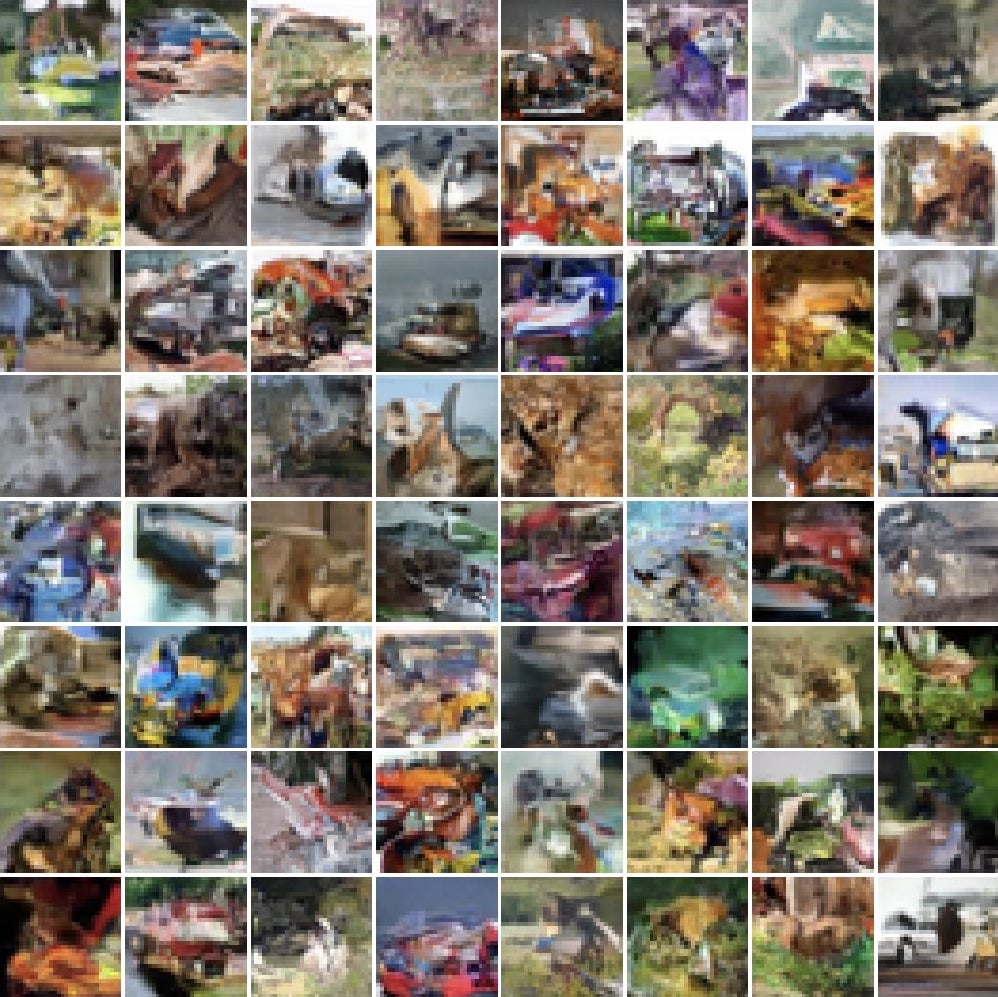}
        \caption{PixelCNN \cite{OordKK16}}
        \label{fig:cifar10_2_sampa}
    \end{subfigure}%
    \begin{subfigure}{0.49\textwidth}
        \centering
        \includegraphics[height=7cm]{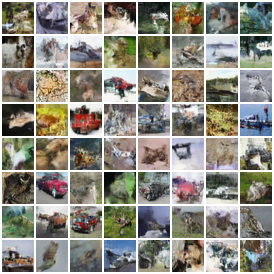}
        \caption{Our \emph{mAR-SCF} (MixLogCDF) }
        \label{fig:cifar10_2_sampb}
    \end{subfigure}

\caption{Random samples when trained on CIFAR10 ($32 \times 32$).}
\label{fig:cifar10_2_samp}
\end{figure*}

\begin{figure*}[h]
\centering
    \begin{subfigure}{0.49\textwidth}
        \centering
        \includegraphics[height=7cm]{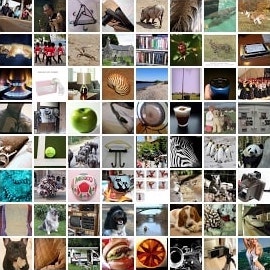}
        \caption{Real Data}
        \label{fig:imagenet32_sampa}
    \end{subfigure}%
    \begin{subfigure}{0.49\textwidth}
        \centering
        \includegraphics[height=7cm]{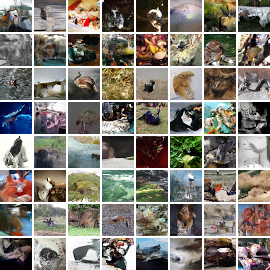}
        \caption{Flow++ \cite{HoCSDA19}}
        \label{fig:imagenet32_sampb}
    \end{subfigure}
    \vspace{2mm}
    
    \begin{subfigure}{0.49\textwidth}
        \centering
        \includegraphics[height=7cm]{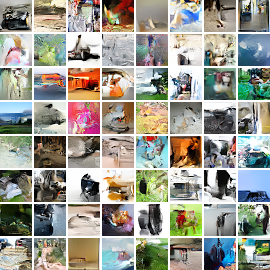}
        \caption{Residual Flows \cite{abs-1906-02735}}
        \label{fig:imagenet32_sampc}
    \end{subfigure}%
    \begin{subfigure}{0.49\textwidth}
        \centering
        \includegraphics[height=7cm]{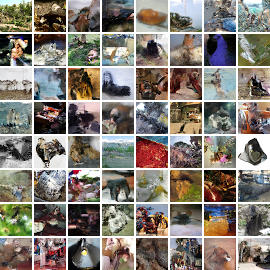}
        \caption{Our \emph{mAR-SCF} (MixLogCDF) }
        \label{fig:imagenet32_sampd}
    \end{subfigure}

\caption{Random samples when trained on ImageNet ($32 \times 32$).}
\label{fig:imagenet32_samp}
\end{figure*}

\begin{figure*}
    \centering
    \begin{tabularx}{1.0\linewidth}{@{}c@{\hskip 0.05cm}c@{\hskip 0.05cm}c@{\hskip 0.05cm}c@{\hskip 0.05cm}c@{\hskip 0.05cm}c@{\hskip 0.5cm}c@{\hskip 0.05cm}c@{\hskip 0.05cm}c@{\hskip 0.05cm}c@{\hskip 0.05cm}c@{\hskip 0.05cm}c@{}}
    
    \includegraphics[height=1.3cm,width=1.3cm]{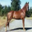} &
    \includegraphics[height=1.3cm,width=1.3cm]{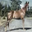} &
    \includegraphics[height=1.3cm,width=1.3cm]{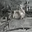} &
    \includegraphics[height=1.3cm,width=1.3cm]{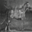} &
    \includegraphics[height=1.3cm,width=1.3cm]{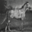} &
    \includegraphics[height=1.3cm,width=1.3cm]{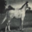} &
    \includegraphics[height=1.3cm,width=1.3cm]{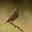} &
    \includegraphics[height=1.3cm,width=1.3cm]{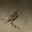} &
    \includegraphics[height=1.3cm,width=1.3cm]{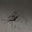} &
    \includegraphics[height=1.3cm,width=1.3cm]{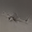} &
    \includegraphics[height=1.3cm,width=1.3cm]{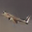} &
    \includegraphics[height=1.3cm,width=1.3cm]{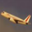} \\
    \vspace{0.5cm}
    
    \includegraphics[height=1.3cm,width=1.3cm]{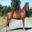} &
    \includegraphics[height=1.3cm,width=1.3cm]{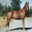} &
    \includegraphics[height=1.3cm,width=1.3cm]{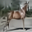} &
    \includegraphics[height=1.3cm,width=1.3cm]{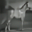} &
    \includegraphics[height=1.3cm,width=1.3cm]{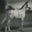} &
    \includegraphics[height=1.3cm,width=1.3cm]{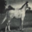} &
    \includegraphics[height=1.3cm,width=1.3cm]{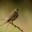} &
    \includegraphics[height=1.3cm,width=1.3cm]{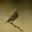} &
    \includegraphics[height=1.3cm,width=1.3cm]{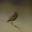} &
    \includegraphics[height=1.3cm,width=1.3cm]{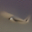} &
    \includegraphics[height=1.3cm,width=1.3cm]{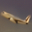} &
    \includegraphics[height=1.3cm,width=1.3cm]{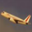}
    \\
    
    \includegraphics[height=1.3cm,width=1.3cm]{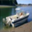} &
    \includegraphics[height=1.3cm,width=1.3cm]{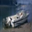} &
    \includegraphics[height=1.3cm,width=1.3cm]{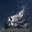} &
    \includegraphics[height=1.3cm,width=1.3cm]{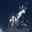} &
    \includegraphics[height=1.3cm,width=1.3cm]{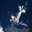} &
    \includegraphics[height=1.3cm,width=1.3cm]{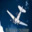} &
    \includegraphics[height=1.3cm,width=1.3cm]{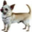} &
    \includegraphics[height=1.3cm,width=1.3cm]{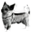} &
    \includegraphics[height=1.3cm,width=1.3cm]{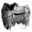} &
    \includegraphics[height=1.3cm,width=1.3cm]{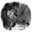} &
    \includegraphics[height=1.3cm,width=1.3cm]{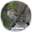} &
    \includegraphics[height=1.3cm,width=1.3cm]{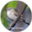} \\
    
    \includegraphics[height=1.3cm,width=1.3cm]{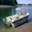} &
    \includegraphics[height=1.3cm,width=1.3cm]{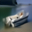} &
    \includegraphics[height=1.3cm,width=1.3cm]{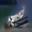} &
    \includegraphics[height=1.3cm,width=1.3cm]{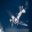} &
    \includegraphics[height=1.3cm,width=1.3cm]{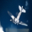} &
    \includegraphics[height=1.3cm,width=1.3cm]{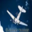} &
    \includegraphics[height=1.3cm,width=1.3cm]{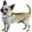} &
    \includegraphics[height=1.3cm,width=1.3cm]{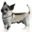} &
    \includegraphics[height=1.3cm,width=1.3cm]{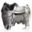} &
    \includegraphics[height=1.3cm,width=1.3cm]{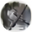} &
    \includegraphics[height=1.3cm,width=1.3cm]{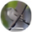} &
    \includegraphics[height=1.3cm,width=1.3cm]{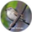}
    \\
    \end{tabularx}
    \caption{{Comparison of interpolation quality of our interpolation scheme (Eq.~11) with a standard linear interpolation scheme. The top row of each result shows the interpolations between real images for the linear method and the corresponding bottom rows are the interpolations with our proposed scheme.}}
    \label{fig:interp}
\end{figure*}

\end{document}